\documentclass{article}
\usepackage{style/unites}
\usepackage{XCharter}
\usepackage[scaled=1.1]{zlmtt}

\usepackage[utf8]{inputenc}
\usepackage[T1]{fontenc}
\usepackage{microtype}

\usepackage{amsmath}
\usepackage{amssymb}
\usepackage{amsfonts}
\usepackage{amsthm}
\usepackage{mathtools}
\usepackage{mathrsfs}
\usepackage{physics}
\usepackage{braket}
\usepackage{slashed}
\usepackage{nicefrac}
\usepackage{textcomp}
\usepackage{dsfont}
\usepackage{bbm}
\usepackage{bm}

\usepackage{graphicx}
\usepackage{subcaption}
\usepackage[export]{adjustbox}
\usepackage{float}
\usepackage{booktabs}
\usepackage{dcolumn}
\newcolumntype{d}[1]{D{.}{.}{#1}}
\usepackage{bigstrut, tabularx, multirow, makecell, diagbox}
\usepackage{colortbl}
\usepackage{tabularray}
\UseTblrLibrary{booktabs}
\usepackage{threeparttable}
\usepackage{tablefootnote}
\usepackage{fontawesome5}

\usepackage{placeins}
\usepackage{caption}
\usepackage{footnote}
\usepackage{enumitem}
\usepackage{multicol}
\usepackage{xspace}
\usepackage{titletoc}
\usepackage{titlesec}
\usepackage[bottom]{footmisc}
\usepackage{setspace}

\usepackage{wrapfig}
\usepackage{tikz}
\usepackage{quantikz}
\usepackage{dashbox}
\usepackage{mdframed}
\usepackage{marvosym}
\usepackage{pifont}
\usepackage{CJK}
\usepackage{url}

\usepackage[table,x11names]{xcolor}
\usepackage[most]{tcolorbox}
\tcbuselibrary{breakable}
\usetikzlibrary{decorations.pathreplacing, fit}

\definecolor{primaryblue}{HTML}{0066CC}
\definecolor{accentcyan}{HTML}{00D4AA}
\definecolor{warmorange}{HTML}{FF6B35}
\definecolor{deepgray}{HTML}{2C3E50}
\definecolor{lightgray}{HTML}{F8F9FA}
\definecolor{gradientstart}{HTML}{667eea}
\definecolor{gradientend}{HTML}{764ba2}

\definecolor{citecolor}{HTML}{0071bc}
\definecolor{citeblue}{RGB}{0, 113, 188}
\definecolor{linkcolor}{HTML}{9A4D92}
\definecolor{firebrick}{rgb}{0.698,0.133,0.133}

\definecolor{paleviolet}{HTML}{E1EEFC}
\definecolor{CarolinaUltraLight}{HTML}{E7F4FC}
\definecolor{lightgrey}{RGB}{247, 247, 247}
\definecolor{shadecolor}{HTML}{EFEFEF}
\definecolor{lightyellow}{rgb}{1.0, 0.95, 0.7}
\definecolor{lightblue}{rgb}{0.90, 0.95, 1.0}
\definecolor{light-gray}{gray}{0.95}

\definecolor{darkgrey}{rgb}{0.5, 0.5, 0.5}
\definecolor{darkgreen}{rgb}{0, 0.5, 0}
\definecolor{mydarkblue}{rgb}{0,0.08,0.45}
\definecolor{mydarkblue2}{rgb}{0.133, 0.133, 0.698}
\definecolor{echodrk}{HTML}{0099cc}
\definecolor{mymauve}{rgb}{0.58,0,0.82}
\definecolor{midnightblue}{rgb}{0.1,0.1,0.44}
\definecolor{oxfordblue}{rgb}{0.0,0.13,0.28}
\definecolor{prussianblue}{rgb}{0.0,0.19,0.33}
\definecolor{coolteal}{rgb}{0, 0.45, 0.45}
\definecolor{olive}{rgb}{0.1, 0.3, 0}
\definecolor{mypurple}{rgb}{0.5,0,0.5}
\definecolor{almond}{rgb}{0.94, 0.87, 0.8}

\definecolor{blue_ampEncoding}{HTML}{DAE8FC}
\definecolor{green_encoder}{HTML}{D5E8D4}
\definecolor{purple_decoder}{HTML}{E1D5E7}
\definecolor{yellow_measure}{HTML}{FFF2CC}
\definecolor{gray_block}{HTML}{F5F5F5}
\definecolor{pink_dru}{HTML}{FAD9D5}
\definecolor{orange_v}{HTML}{FAD7AC}

\definecolor{colorA}{rgb}{1,0,0}
\definecolor{colorB}{rgb}{0,0.3,1}
\definecolor{colorC}{rgb}{0.9,0.8,0.2}
\definecolor{colorD}{rgb}{0,0.65,0}
\definecolor{lesslightgray}{rgb}{0.5,0.5,0.5}
\definecolor{fundamental}{RGB}{55, 110, 111}
\definecolor{Gred}{RGB}{219, 50, 54}
\definecolor{ToCgreen}{RGB}{0, 128, 0}
\definecolor{Sepia}{RGB}{112, 66, 20}
\definecolor{Dblue}{rgb}{0,0.08,0.45}
\definecolor{Blue}{rgb}{0, 0, 0.8}
\definecolor{blue}{rgb}{0,0,1}
\definecolor{UNCblue!10}{rgb}{0.84,0.91,0.98}
\definecolor{RowAlt}{rgb}{0.98,0.98,0.99}

\definecolor{CarolinaBlue}{HTML}{7BAFD4}        
\definecolor{CarolinaLightBlue}{HTML}{B3D4E5}   
\definecolor{CarolinaUltraLight}{HTML}{E8F4F8}  
\definecolor{CarolinaText}{HTML}{1C2B33}        

\usepackage[pagebackref=true,breaklinks=true,colorlinks,hyperfootnotes=false]{hyperref}
\hypersetup{
  colorlinks,
  citecolor=citeblue,
  linkcolor=firebrick,
  urlcolor=firebrick
}
\usepackage[nameinlink,capitalize,noabbrev]{cleveref}

\titlespacing\section{0pt}{4pt plus 4pt minus 2pt}{-2pt plus 2pt minus 2pt}
\titlespacing\subsection{0pt}{2pt plus 4pt minus 2pt}{-2pt plus 2pt minus 2pt}
\titlespacing\subsubsection{0pt}{2pt plus 4pt minus 2pt}{-2pt plus 2pt minus 2pt}

\makeatletter
\def\th@remark{%
  \thm@headfont{\bfseries}%
  \normalfont 
  \thm@preskip\topsep \divide\thm@preskip\tw@
  \thm@postskip\thm@preskip
}
\makeatother

\theoremstyle{definition}

\newtheorem{theorem}{Theorem}[section]
\tcolorboxenvironment{theorem}{
  breakable,
  colback=black!10,
  colframe=white,
  width=\linewidth, 
  enlarge left by=0pt,
  enlarge right by=0pt,
  boxsep=5pt,
  boxrule=0pt,
  left=0pt,right=0pt,top=0pt,bottom=0pt,
  arc=8pt,
  before skip=\topsep,
  after skip=\topsep
}

\newtheorem{lemma}{Lemma}[section]
\tcolorboxenvironment{lemma}{
  breakable,
  colback=black!10,
  colframe=white,
  width=\linewidth,
  enlarge left by=0pt,
  enlarge right by=0pt,
  boxsep=5pt,
  boxrule=0pt,
  left=0pt,right=0pt,top=0pt,bottom=0pt,
  arc=8pt,
  before skip=\topsep,
  after skip=\topsep
}

\tcolorboxenvironment{corollary}{
  breakable,
  colback=black!10,
  colframe=white,
  width=\linewidth,
  enlarge left by=0pt,
  enlarge right by=0pt,
  boxsep=5pt,
  boxrule=0pt,
  left=0pt,right=0pt,top=0pt,bottom=0pt,
  arc=8pt,
  before skip=\topsep,
  after skip=\topsep
}

\tcolorboxenvironment{proposition}{
  breakable,
  colback=black!10,
  colframe=white,
  width=\linewidth,
  enlarge left by=0pt,
  enlarge right by=0pt,
  boxsep=5pt,
  boxrule=0pt,
  left=0pt,right=0pt,top=0pt,bottom=0pt,
  arc=8pt,
  before skip=\topsep,
  after skip=\topsep
}

\tcolorboxenvironment{definition}{
  breakable,
  colback=black!10,
  colframe=white,
  width=\linewidth,
  enlarge left by=0pt,
  enlarge right by=0pt,
  boxsep=5pt,
  boxrule=0pt,
  left=0pt,right=0pt,top=0pt,bottom=0pt,
  arc=8pt,
  before skip=\topsep,
  after skip=\topsep
}

\newtheorem{assumption}{Assumption}[section]
\tcolorboxenvironment{assumption}{
  breakable,
  colback=black!10,
  colframe=white,
  width=\linewidth,
  enlarge left by=0pt,
  enlarge right by=0pt,
  boxsep=5pt,
  boxrule=0pt,
  left=0pt,right=0pt,top=0pt,bottom=0pt,
  arc=8pt,
  before skip=\topsep,
  after skip=\topsep
}

\tcolorboxenvironment{claim}{
  breakable,
  colback=black!10,
  colframe=white,
  width=\linewidth,
  enlarge left by=0pt,
  enlarge right by=0pt,
  boxsep=5pt,
  boxrule=0pt,
  left=0pt,right=0pt,top=0pt,bottom=0pt,
  arc=8pt,
  before skip=\topsep,
  after skip=\topsep
}

\tcolorboxenvironment{problem}{
  breakable,
  colback=black!10,
  colframe=white,
  width=\linewidth,
  enlarge left by=0pt,
  enlarge right by=0pt,
  boxsep=5pt,
  boxrule=0pt,
  left=0pt,right=0pt,top=0pt,bottom=0pt,
  arc=8pt,
  before skip=\topsep,
  after skip=\topsep
}

\tcolorboxenvironment{question}{
  breakable,
  colback=black!10,
  colframe=white,
  width=\linewidth,
  enlarge left by=0pt,
  enlarge right by=0pt,
  boxsep=5pt,
  boxrule=0pt,
  left=0pt,right=0pt,top=0pt,bottom=0pt,
  arc=8pt,
  before skip=\topsep,
  after skip=\topsep
}

\newtcolorbox{titleblock}{
  enhanced,
  frame hidden,
  colback=CarolinaUltraLight,
  colframe=CarolinaUltraLight,
  boxrule=0pt,
  arc=10pt,
  left=14pt,
  right=14pt,
  top=14pt,
  bottom=14pt,
  width=\linewidth,
  before skip=12pt plus 4pt,
  after skip=12pt plus 4pt,
  grow to left by=1.5pt,
  grow to right by=1.5pt,
  before upper={
    \setlength{\parindent}{0cm}
    \setlength{\parskip}{0.5cm}
  }
}

\crefname{theorem}{Theorem}{Theorems}
\crefname{proposition}{Proposition}{Propositions}
\crefname{lemma}{Lemma}{Lemmas}
\crefname{corollary}{Corollary}{Corollaries}
\crefname{definition}{Definition}{Definitions}
\crefname{assumption}{Assumption}{Assumptions}
\crefname{remark}{Remark}{Remarks}
\crefname{problem}{Problem}{Problems}
\crefname{property}{Property}{property}
\crefname{question}{Question}{Questions}

\numberwithin{equation}{section}
\numberwithin{theorem}{section}
\numberwithin{proposition}{section}
\numberwithin{definition}{section}
\numberwithin{lemma}{section}
\numberwithin{assumption}{section}
\numberwithin{remark}{section}

\def\1{\bm{1}}

\makeatletter
\let\save@mathaccent\mathaccent
\newcommand*\if@single[3]{%
    \setbox0\hbox{${\mathaccent"0362{#1}}^H$}%
    \setbox2\hbox{${\mathaccent"0362{\kern0pt#1}}^H$}%
    \ifdim\ht0=\ht2 #3\else #2\fi
}
\newcommand*\rel@kern[1]{\kern#1\dimexpr\macc@kerna}
\newcommand*\widebar[1]{\@ifnextchar^{{\wide@bar{#1}{0}}}{\wide@bar{#1}{1}}}
\newcommand*\wide@bar[2]{\if@single{#1}{\wide@bar@{#1}{#2}{1}}{\wide@bar@{#1}{#2}{2}}}
\newcommand*\wide@bar@[3]{%
    \begingroup
    \def\mathaccent##1##2{%
        \let\mathaccent\save@mathaccent
        \if#32 \let\macc@nucleus\first@char \fi
        \setbox\z@\hbox{$\macc@style{\macc@nucleus}_{}$}%
        \setbox\tw@\hbox{$\macc@style{\macc@nucleus}{}_{}$}%
        \dimen@\wd\tw@
        \advance\dimen@-\wd\z@
        \divide\dimen@ 3
        \@tempdima\wd\tw@
        \advance\@tempdima-\scriptspace
        \divide\@tempdima 10
        \advance\dimen@-\@tempdima
        \ifdim\dimen@>\z@ \dimen@0pt\fi
        \rel@kern{0.6}\kern-\dimen@
        \if#31
        \overline{\rel@kern{-0.6}\kern\dimen@\macc@nucleus\rel@kern{0.4}\kern\dimen@}%
        \advance\dimen@0.4\dimexpr\macc@kerna
        \let\final@kern#2%
        \ifdim\dimen@<\z@ \let\final@kern1\fi
        \if\final@kern1 \kern-\dimen@\fi
        \else
        \overline{\rel@kern{-0.6}\kern\dimen@#1}%
        \fi
    }%
    \macc@depth\@ne
    \let\math@bgroup\@empty \let\math@egroup\macc@set@skewchar
    \mathsurround\z@ \frozen@everymath{\mathgroup\macc@group\relax}%
    \macc@set@skewchar\relax
    \let\mathaccentV\macc@nested@a
    \if#31
    \macc@nested@a\relax111{#1}%
    \else
    \def\gobble@till@marker##1\endmarker{}%
    \futurelet\first@char\gobble@till@marker#1\endmarker
    \ifcat\noexpand\first@char A\else
    \def\first@char{}%
    \fi
    \macc@nested@a\relax111{\first@char}%
    \fi
    \endgroup
    }
\makeatother

\DeclareMathAlphabet{\mathsfit}{\encodingdefault}{\sfdefault}{m}{sl}
\SetMathAlphabet{\mathsfit}{bold}{\encodingdefault}{\sfdefault}{bx}{n}

\let\tilde\widetilde
\let\hat\widehat

\renewcommand{\arraystretch}{1.15}
\usepackage{algpseudocode}

\newcommand{\ourmethod}{CAST\xspace}

\makeatletter
\let\c@lemma\c@theorem
\let\c@proposition\c@theorem
\let\c@definition\c@theorem
\let\c@assumption\c@theorem
\let\c@remark\c@theorem
\let\c@claim\c@theorem
\makeatother

\begin{document}

\makeatletter
\def\blfootnote{\gdef\@thefnmark{}\@footnotetext}
\makeatother

\makeatletter
\pagestyle{fancy}
\fancyhf{}
\renewcommand{\headrulewidth}{1pt}
\chead{\small\bf CAST: Context- and Anomaly Structure-Conditioned Time Series Anomaly Generation
}
\cfoot{\thepage}
\thispagestyle{fancy}
\makeatother

\makeatletter
\def\icmldate#1{\gdef\@icmldate{#1}}
\icmldate{\today}
\makeatother

\makeatletter
\fancypagestyle{fancytitlepage}{
  \fancyhead{}
  \lhead{\includegraphics[height=0.8cm]{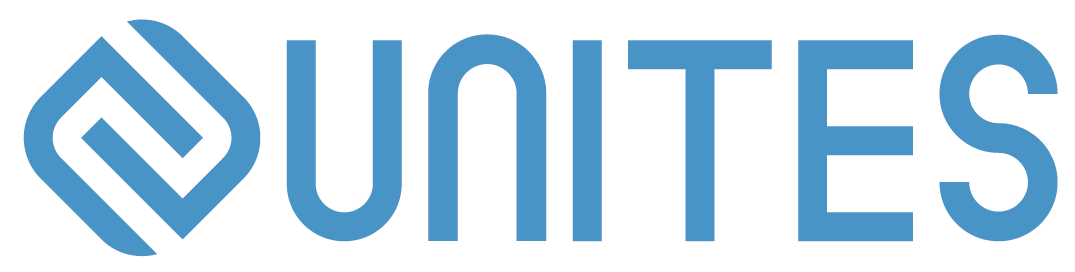}}
  \rhead{\it \@icmldate}
  \cfoot{}
}
\makeatother

\thispagestyle{fancytitlepage}

\vspace*{0.5em}

\noindent
\begin{titleblock}
    {\setlength{\parskip}{0cm}
     \raggedright
     {\setstretch{1.2}
      \LARGE\bfseries
      
      \par}
    }
    \vskip 0.2cm

    \begin{icmlauthorlist}
\mbox{Haochen Zhang$^{\,1}$},
\mbox{Jie Peng$^{\,1\,}$},
\mbox{Songyuan Sui$^{\,2\,}$},
\mbox{Yu-Chao Huang$^{\,1\,}$},
\mbox{Xiangqi Zhu$^{\,3\,}$},
and \mbox{Tianlong Chen$^{\,1\,\textrm{\Letter}}$}
\end{icmlauthorlist}

$^{1\,}$UNITES Lab, University of North Carolina at Chapel Hill
\quad $^{2\,}$Rice University
\quad $^{3\,}$Oregon State University

\{haochenz, morris, tianlong\}@cs.unc.edu,
 ss275@rice.edu, xiangqi.zhu@oregonstate.edu

$^{\textrm{\Letter}}$ Corresponding Author

    \vskip 0.2cm

    Anomalous time series play a critical role in safety-critical domains, yet they are inherently scarce, heterogeneous, and costly to obtain. Existing time series generation methods predominantly focus on synthesizing normal data, providing limited value when anomalous samples are needed. We identify two fundamental challenges in anomaly generation: (i) the scarcity of anomaly data, and (ii) the heterogeneous morphological characteristics of anomalies.
To address these challenges , we propose \textbf{CAST}, a \underline{\textbf{C}}ontext- and \underline{\textbf{A}}nomaly \underline{\textbf{S}}tructure-conditioned \underline{\textbf{T}}ime series anomaly generation framework with principled two-stage pretraining and finetuning strategy. In pretraining stage, we leverage abundant normal time series data to learn underlying system dynamics and substantially mitigate the limited availability of anomaly data. During finetuning, CAST explicitly conditions the generator on learned anomaly structure representations, enabling it to capture heterogeneous anomaly morphologies under similar contextual conditions.
Extensive experiments on multiple real-world univariate and multivariate datasets demonstrate that CAST consistently outperforms state-of-the-art anomaly generation methods in terms of both generation fidelity and downstream task utility, highlighting the effectiveness of the proposed approach.

    \vskip 0.2cm
    {\setlength{\parskip}{0cm}
    }
\end{titleblock}

\blfootnote{%
$^{\textrm{\Letter}}$ Corresponding authors: \{tianlong\}@cs.unc.edu
\\[2.5em]
\ifcsname @icmlpreprint\endcsname
  \textit{\csname @icmlpreprint\endcsname}%
\fi
}

\section{Introduction}

Time series generation \cite{naiman2024generative,yuan2024diffusionts,hu2024flowts,chang2023llm4ts,das2024decoder} aims to model and synthesize the temporal evolution of complex dynamical systems, and has recently emerged as a powerful paradigm for data-driven simulation, augmentation, and representation learning. Time series generative models can capture underlying system dynamics and synthesize realistic temporal samples to support a wide range of downstream applications. However, existing methods predominantly focus on synthesizing normal data, which is already abundant in real-world scenarios due to mature sensing infrastructures and long-term data collection. As a result, synthesized normal time series data offers limited marginal benefit \cite{chandola2009anomaly}. In contrast, anomalous time series are typically rare and costly to observe but still play a critical role in safety-critical applications, e.g., fault diagnosis \cite{jin2022time}, and risk assessment \cite{chen2025explainable}. Consequently, time series anomaly generation is impactful, motivating research going beyond normal time series generation \cite{darban2025genias,singh2022c}.

The challenge of anomalous time series generation is fundamentally rooted in the scarcity of anomalous data \cite{zamanzadeh2024deep}. In real-world systems, anomalies are rare by nature, which leads to severe overfitting risks when learning generative models directly from limited anomaly observations \cite{zamanzadeh2024deep}. \citet{singh2022c} explored pretraining--finetuning strategies to leverage abundant normal time series data under a VAE framework. Such approaches demonstrate that exploiting normal data is effective for mitigating anomaly data scarcity and stabilizing training.

\begin{wrapfigure}{r}{0.5\linewidth}
    \centering
    \includegraphics[width=\linewidth]{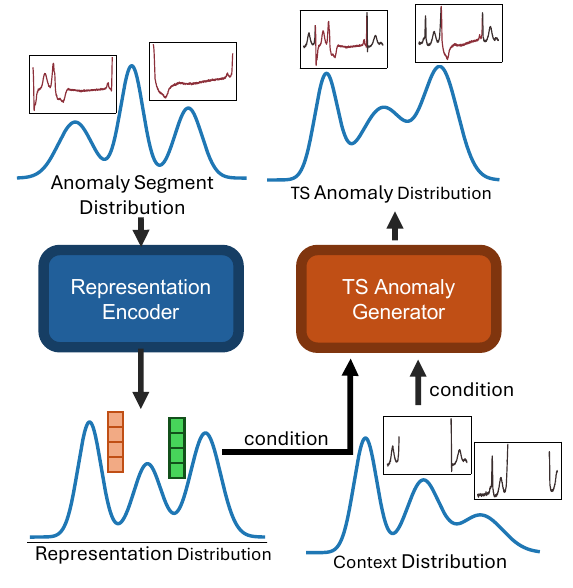}
    \caption{Overview of CAST. The representation encoder first infers a anomaly structure latent from a known anomaly segment. The time series anomaly generator takes the anomaly structure latent and normal context as conditions to generate anomalous time series consistent with normal context and well aligned with the given anomaly structure.}
    \label{fig:1}
\end{wrapfigure}

However, alleviating anomaly data scarcity alone is not sufficient for high-quality anomaly generation. Anomalies often exhibit varying morphologies under similar normal contexts \cite{zamanzadeh2024deep}. This variability cannot be adequately captured by context-conditional generation alone, which implicitly assumes a unimodal or smoothly varying anomaly distribution. As a result, models that rely solely on context conditioning tend to average over distinct anomaly patterns, leading to blurred or unrealistic generations. Existing attempts to increase diversity, such as latent perturbation strategies~\cite{darban2025genias}, introduce stochastic variation but do not explicitly model anomaly-specific structure. Consequently, they lack precise control over anomaly morphology and struggle to faithfully represent multimodal anomaly distributions.


In this work, we propose \textbf{CAST}, a
\textbf{C}ontext- and \textbf{A}nomaly \textbf{S}tructure-conditioned
\textbf{T}ime-series anomaly generation framework based on rectified flow matching with a pretraining-finetuning strategy. As illustrated in Figure~\ref{fig:1}, \ourmethod\ incorporates anomaly structure representations as an additional conditioning signal, facilitating more controllable anomaly synthesis while preserving consistency with the surrounding normal behavior. To effectively learn CAST under severe anomaly data scarcity, we adopt a pretraining-finetuning strategy that fully exploits abundant normal data.
In particular, a context-unaware pretraining stage allows the generator to
capture general system dynamics, while subsequent structure-conditioned
finetuning focuses on modeling heterogeneous anomaly morphologies.
Our extensive experiments (see Section~\ref{sec:experiment}) demonstrate that this design yields substantial
improvements in both generation fidelity and downstream task utility.

In summary, our contributions are threefold:

\begin{itemize}
    \item We highlight two key challenges in anomalous time series generation: (i) the scarcity of anomaly data, and (ii) the heterogeneous and multimodal nature of anomaly morphologies, which are not adequately captured by existing time series anomaly generation frameworks.
    

     \item We propose \textbf{CAST}, a context- and anomaly structure-conditioned time series generation framework with a two-stage pretraining–finetuning strategy. The proposed design disentangles normal context and anomaly structure, enabling controllable and diverse anomaly synthesis while effectively leveraging abundant normal data.

    
    \item We conduct extensive experiments on multiple real-world datasets, demonstrating that CAST consistently outperforms state-of-the-art anomaly generation methods in both generation fidelity and downstream anomaly detection performance. Qualitative analyses further reveal that CAST learns a structured and interpretable anomaly latent space, enabling smooth interpolation and synthesis of diverse anomaly morphologies under fixed normal contexts.
\end{itemize}

\section{Preliminaries}
We provide the background required for \ourmethod\ herein. In \ourmethod, we use a vector quantized variational autoencoder (VQ-VAE) as the anomaly structure representation extractor, and a rectified flow matching model as the generator.

\subsection{Vector Quantized Variational Autoencoder}
VQ-VAE consists of an encoder $\text{Enc}(\cdot)$, a quantizer $\text{Quan}(\cdot)$, and a decoder $\text{Dec}(\cdot)$. Given an input signal $x\in \mathbb{R}^d$, the encoder produces a continuous latent representation $z_e = \text{Enc}(x) \in \mathbb{R}^{\tilde{d}}$. Then $z_e$ goes through the quantizer, i.e., it is mapped to the nearest entry in a learnable codebook $\mathcal{C} = \{ e_k \}_{k=1}^{K}$, where $e_k \in \mathbb{R}^{\tilde{d}}$ for all $k\in[K]$.

The quantized latent $z_q$ is obtained via nearest-neighbor lookup, i.e., $z_q = e_{k^\ast}$, $k^\ast = \arg\min_k \| z_e - e_k \|_2.$ The decoder reconstructs the input as $\hat{x} = \text{Dec}(z_q)$. During training, VQ-VAE learns to minimize the loss function 
\begin{align*}
    \mathcal{L}_{\text{VQ-VAE}}
    =
    \| x - \hat{x} \|_2^2
    +
    \| \text{sg}[z_e] - z_q \|_2^2
    +
    \beta \| z_e - \text{sg}[z_q] \|_2^2,
\end{align*}
where $\text{sg}[\cdot]$ denotes the stop-gradient operator. The three terms are reconstruction loss, codebook loss, and commitment loss, respectively. $\beta$ denotes the weight for the commitment loss.

\subsection{Rectified Flow Matching}
Rectified flow learns to transport data samples from a simple base distribution $p_0(x_0)$ to the data distribution $p_{data}(x_1)$. The learning target of rectified flow is a time-dependent vector field $v_\theta(x,t)$, and the samples $x_1\sim p_\theta(x_1)$ can be generated by solving the ordinary differential equation (ODE):
\begin{align*}
    \frac{d x_t}{d t} = v_\theta(x_t, t), \quad x_0\sim p_0(x_0), \quad t \in [0,1].
\end{align*}
The learning target $v_t^*$ of rectified flow is a linear path between $x_0$ and $x_1$, i.e., $x_t^* = (1 - t) x_0 + t x_1$ and $v_t^* = \frac{dx_t^*}{dt}=x_1-x_0$. Therefore, the rectified flow model learns to minimize the objective $\mathcal{L}_{\text{RF}}(\theta) = \mathbb{E}_{x_0,x_1,t}    [\| v_\theta(x_t, t) - v_t^* \|^2]$.

\section{Method}
This section begins by formalizing the time series anomaly generation problem from the probabilistic perspective (Section~\ref{sec:prob_model}). Then we present two core components of our proposed CAST framework, i.e., pretraining-finetuning strategy (Section~\ref{sc:training_pipeline}) and context- and anomaly structure- conditional generation (Section~\ref{sec:scfm}). 
\subsection{Probabilistic Modeling \label{sec:prob_model}}

Let $(\Omega, \mathcal F, \mathbb P)$ be a probability space, and let $X : \Omega \to \mathbb{R}^{T \times C}$ denote a multivariate anomalous time series random variable. We model the anomaly location and duration using a binary mask $M:\Omega \to \{0,1\}^T$. Based on this, an anomalous time series is decomposed into an anomaly segment $X_{\mathrm{ano}} := X\circ M$ and the corresponding normal context $X_{\mathrm{ctx}} := X\circ (\mathbf{1}_T-M)$, where $\mathbf{1}_T$ denotes the all-ones vector and $\circ$ denotes the Hadamard product. A common modeling is that the anomalous segment is generated conditionally on the surrounding normal context,
i.e., $X_{\mathrm{ano}} \sim p(\,\cdot \mid X_{\mathrm{ctx}})$. This formulation captures the dependence of anomaly morphology on the underlying normal dynamics encoded by the context. However, such context-only conditional generation is often insufficient in practice, as anomalies may exhibit substantially different morphologies even under similar normal contexts due to unobserved factors. To address this insufficiency, we introduce a latent-variable formulation for anomaly generation: 
\begin{align*}
    p(X_{\mathrm{ano}} \mid X_{\mathrm{ctx}}) = \int p(X_{\mathrm{ano}} \mid X_{\mathrm{ctx}}, Z=z)\, p(z)\, dz,
\end{align*}
where $Z$ denotes oracle latent which captures anomaly-specific structure that is not explained by the normal context alone. In our framework, a VQ-VAE is used to get an inferred latent $U$, while the conditional generator $p(X_{\mathrm{ano}} \mid X_{\mathrm{ctx}}, U)$ is instantiated using a context- and latent-conditioned flow matching model. Throughout the paper, $X_{\mathrm{ctx}}$ denotes the normal context and $X_{\mathrm{ano}}$ denotes the anomalous segment masked by $M$.
The latent variable $U$ represents the inferred anomaly structure.

\subsection{Pretraining-Finetuning Strategy \label{sc:training_pipeline}}
Our pretraining-finetuning strategy consists of a context-unaware pretraining and a context-aware finetuning.

\textbf{Stage I: Context-unaware Pretraining} First of all, we train a VQ-VAE. Leveraging a trained VQ-VAE, we pretrain the rectified flow matching model without incorporating contextual information. Conditioned on the inferred latent $U$, the model learns the distribution
$p_\theta(X \mid U)$ on a mixed dataset with both normal and anomalous data, aligning the latent with a coherent temporal pattern. This stage exploits abundant normal data to establish stable latent-to-signal mappings and focuses on mode-level generation rather than contextual consistency.
    
\textbf{Stage II: Context-aware Finetuning} We finetune the rectified flow matching generator with structured conditioning on both the normal context $X_{\mathrm{ctx}}$ and the inferred latent code $U$ obtained by a trained VQVAE. The model learns the conditional distribution $p_\theta(X_{\mathrm{ano}} \mid X_{\mathrm{ctx}}, U)$, enabling context-consistent yet structurally diverse anomaly generation.

\subsection{Context- and Anomaly Structure- Conditional Generation }\label{sec:scfm}

\begin{wrapfigure}{r}{0.48\linewidth}
    \centering
    \includegraphics[width=\linewidth]{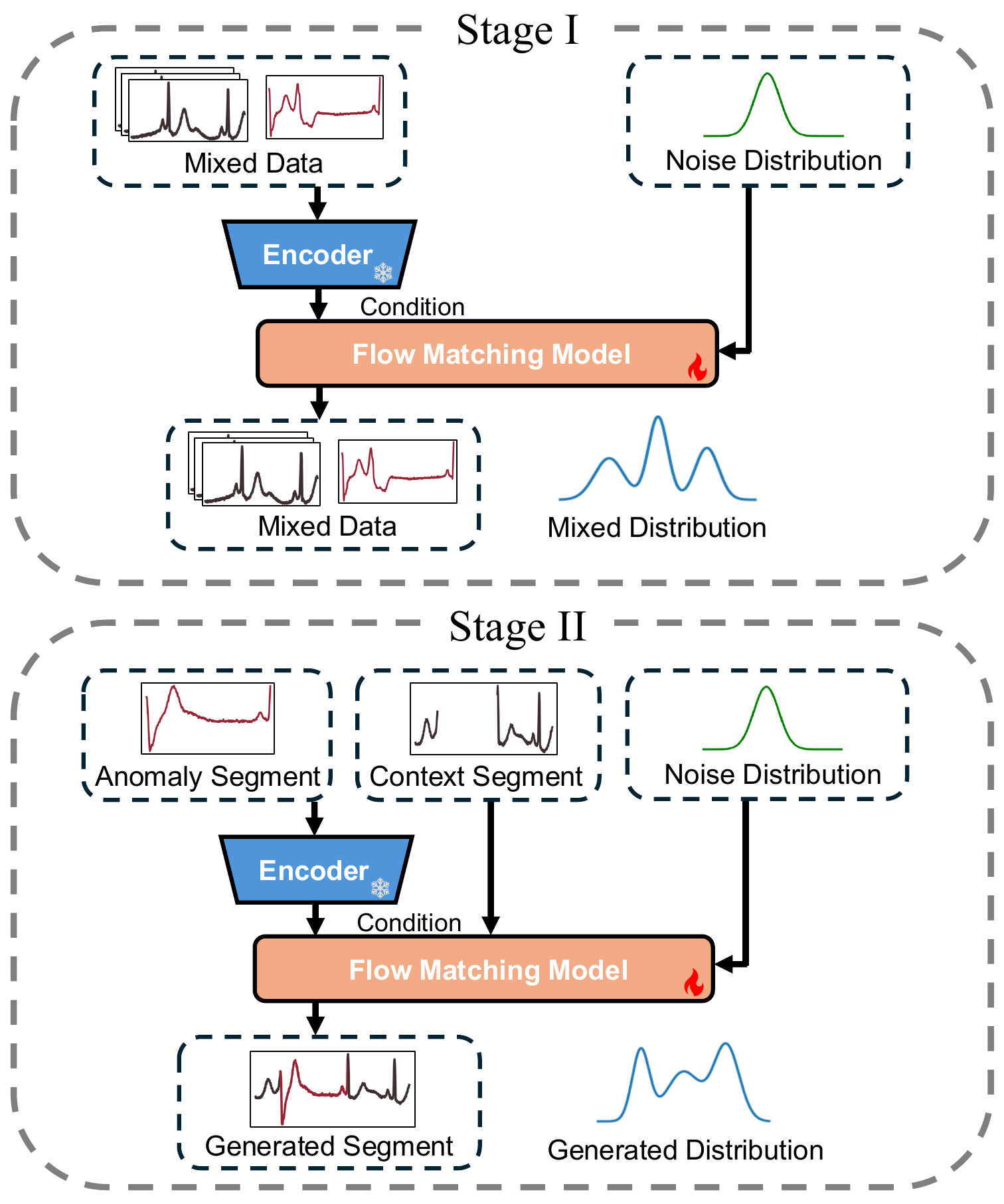}
    \caption{Two-stage training pipeline of CAST.}
    \label{fig:2}
\end{wrapfigure}

Given normal context $X_{\mathrm{ctx}}$ and discrete anomaly-structure latent inferred from real anomaly segments $U$, we model the conditional distribution
of anomaly segments using a structured conditional flow matching framework.
Specifically, we define a conditional time-dependent vector field
$v_\theta(X_t, U, X_{\mathrm{ctx}},t)$ that depends on both
the surrounding normal context and the discrete latent code encoding high-level anomaly structure. During training, we construct a linear interpolation path between noise and the real anomaly segment. Crucially, noise is injected exclusively into the anomalous region, preserving the normal context as a fixed condition:
$X_t
=
X_{\mathrm{ctx}}
+
(1-t)\,(X_0 \circ M)
+
t\,X_{\mathrm{ano}}$.
Under this construction, the target velocity is
$
Y = (X_{\mathrm{ano}} - X_0)\circ M.
$
The conditional vector field is trained by minimizing a masked flow matching objective
that only accounts for the anomalous region: $\mathbb{E}_{X_1,X_0,M,t} [\|v_\theta(X_t, U, X_\mathrm{ctx}, t)-Y\|_2^2]$. The training procedure is summarized in Algorithm~\ref{alg:scfm_train}.
Notably, by restricting noise injection and loss computation to the anomalous segment,
the normal context is preserved as a stable conditioning signal throughout training.

At inference time, anomaly generation is performed by combining
discrete anomaly-structure latents extracted from real anomalous segments
and normal contexts sampled from the normal dataset.
Specifically, a discrete latent code $U$ is inferred from an observed anomaly
segment representing a seen anomaly structure, and a normal context $X_{\mathrm{ctx}}$ is constructed using a mask $M$.
Starting from an initial state where noise is injected only into the masked region,
the conditional flow model integrates from $t=0$ to $t=1$ to generate a synthetic
anomaly that is consistent with the normal context and follows the inferred anomaly structure.
The full inference procedure is given in Algorithm~\ref{alg:vrfm_inference}.

\section{Experiments \label{sec:experiment}}

\begin{table}[t]
\caption{Comparison on five real-world datasets. Metrics are scaled for readability (see row labels). MSE and MAE are computed on anomalous segments only. Lower is better. Negative values in MMD (and KID) may occur due to finite-sample estimation error and should be interpreted as values close to zero.}
\label{tab:all_metrics}
\centering
\begin{tabular}{cc|cccccc}
\toprule
&& \ourmethod & FlowTS & \makecell{Diffusion\\-TS} & TimeVAE & CGATS & GenIAS \\
\midrule

\multirow{5}{*}{MIT-DB}
& MSE ($\times 10^{-3}$)  & \textbf{2.9} & 8.1 & 5.4 & 6.6 & \underline{4.5} & 50.0 \\
& MAE ($\times 10^{-2}$)  & \textbf{3.6} & 6.3 & 4.9 & 5.9 & 4.0 & 17.0 \\
& FID ($\times 10^{-1}$)  & \textbf{0.74} & 1.41 & 9.78 & \underline{1.24} & 4.44 & 32.0 \\
& KID ($\times 10^{-3}$)  & \textbf{0.67} & 2.32 & 21.14 & \underline{1.83} & 2.74 & 60.0 \\
& MMD ($\times 10^{-2}$) & \textbf{1.15} & 3.73 & 9.18 & \underline{1.84} & 8.33 & 13.07 \\
\midrule

\multirow{5}{*}{SVDB}
& MSE ($\times 10^{-3}$)  & \textbf{4.3} & 11.0 & \underline{9.1} & 12.0 & 9.8 & 35.0 \\
& MAE ($\times 10^{-2}$)  & \textbf{3.9} & 6.2 & 6.2 & 7.8 & \underline{5.8} & 14.7 \\
& FID ($\times 10^{-1}$)  & \textbf{2.14} & 3.68 & \underline{2.93} & 4.11 & 5.42 & 21.83 \\
& KID ($\times 10^{-3}$)  & \textbf{0.19} & 1.23 & \underline{1.01} & 2.86 & 3.47 & 32.43 \\
& MMD ($\times 10^{-2}$) & \textbf{0.74} & 1.50 & \underline{1.41} & 1.75 & 2.11 & 10.50 \\
\midrule

\multirow{5}{*}{QTDB}
& MSE ($\times 10^{-3}$)  & \textbf{2.2} & \underline{3.7} & 4.0 & 6.3 & 4.4 & 22.0 \\
& MAE ($\times 10^{-2}$)  & \textbf{2.5} & 3.7 & 4.1 & 4.9 & \underline{3.1} & 11.0 \\
& FID ($\times 10^{0}$)   & \underline{0.64} & 0.91 & 1.03 & 1.98 & \textbf{0.56} & 9.43 \\
& KID ($\times 10^{-3}$)  & \underline{0.23} & 3.45 & 3.80 & 28.65 & \textbf{-0.64} & 162.49 \\
& MMD ($\times 10^{-3}$) & \textbf{4.29} & 5.09 & 5.38 & 5.96 & \underline{4.41} & 7.29 \\
\midrule

\multirow{5}{*}{PV}
& MSE ($\times 10^{-2}$)  & \underline{3.3} & 6.4 & 6.4 & \textbf{3.0} & 4.0 & 16.0 \\
& MAE ($\times 10^{-2}$)  & \textbf{8.8} & 13.0 & 14.0 & 12.0 & \underline{9.9} & 24.0 \\
& FID ($\times 10^{0}$)   & \textbf{1.07} & 8.93 & 12.58 & \underline{2.88} & 2.11 & 3.81 \\
& KID ($\times 10^{-2}$)  & \textbf{0.35} & 16.04 & 22.73 & \underline{3.64} & 2.20 & 5.31 \\
& MMD ($\times 10^{-2}$) & \textbf{0.60} & 1.96 & 2.47 & \underline{1.72} & 1.23 & 1.52 \\
\midrule

\multirow{5}{*}{Metro}
& MSE ($\times 10^{-2}$)  & \textbf{1.4} & 3.2 & 4.4 & 5.2 & \underline{1.8} & 19.0 \\
& MAE ($\times 10^{-2}$)  & \textbf{7.8} & 11.0 & 14.0 & 18.0 & \underline{9.5} & 35.0 \\
& FID ($\times 10^{-1}$)  & \textbf{2.19} & \underline{4.71} & 14.32 & 58.05 & 10.42 & 56.63 \\
& KID ($\times 10^{-3}$)  & \textbf{-7.08} & \underline{-5.10} & 1.94 & 83.32 & -0.03 & 61.29 \\
& MMD ($\times 10^{-3}$) & \textbf{2.06} & \underline{2.88} & 3.79 & 5.89 & 3.55 & 5.42 \\

\bottomrule
\end{tabular}
\end{table}

\subsection{Experimental Setup}
\paragraph{Datasets} The experiments involve five real-world datasets, all of which contain annotated anomalous patterns. We consider three ECG datasets with different sampling frequencies and distinct arrhythmia characteristics to demonstrate the effectiveness of \ourmethod\ under strong anomaly settings. The ECG datasets include the MIT-BIH Arrhythmia Database (MIT-DB),
the QT Database (QT-DB), and the MIT-BIH Supraventricular Arrhythmia Database (SVDB). To further evaluate the effectiveness of \ourmethod\ under weak anomaly settings, we additionally adopt a photovoltaic (PV) active power generation dataset and the UCI Metro Interstate Traffic Volume dataset. We provide detailed dataset information in Appendix~\ref{sec:appendix_dataset}.

\paragraph{Baselines and Implementation Details}
We compare CAST with five representative baseline methods, including two approaches specifically designed for time series anomaly generation (GenIAS and C-GATS), as well as three general-purpose time series generation models (Diffusion-TS, FlowTS, and TimeVAE).
For all general-purpose generators, we adapt their training and inference pipelines to the same context-conditional anomaly generation setting as CAST, where the normal context is provided as a conditioning signal and generation is restricted to masked anomalous regions.
All baselines are trained and evaluated under identical anomaly masks, context construction, and data splits to ensure a fair comparison.
Since GenIAS and C-GATS do not provide publicly available implementations, we re-implement these methods following the architectural designs, training objectives, and hyperparameter settings described in their original papers.
Detailed implementation choices, architectural configurations, and training protocols for all baselines are provided in Appendix~\ref{appendix:implementation_details}.

\paragraph{Metrics}
We evaluate anomalous time series generation from both generation fidelity and downstream utility perspectives. Generation fidelity is assessed with point-wise error including mean squared error (MSE) and mean absolute error (MAE) computed on the anomaly regions only as well as distribution similarity metrics including FID, KID, and MMD. To evaluate practical utility, we adopt a downstream anomaly detection protocol where synthetic anomalies are used to train detectors. Performance is assessed via F1-score on real anomalous time series from the test sets. Unless otherwise specified, all results are averaged over five runs. We evaluate seven anomaly detectors in total, including classical machine learning methods (i.e., Random Forest and CatBoost), deep learning models (i.e., temporal convolutional network (TCN) and RobustTAD~\cite{gao2020robusttad}), a time series foundation model (Moment~\cite{goswami2024moment}), and large language model-based detectors (LTSM~\cite{chuang2024ltsm} and OneFitsAll~\cite{zhou2023one}). Implementation details of these anomaly detectors are given in Appendix~\ref{TSAD_implementation}.

\begin{table}[!t]
    \small
    \renewcommand{\arraystretch}{1.1}
    \caption{Downstream anomaly detection performance (F1-score). Anomaly detectors are trained using synthetic anomalies and evaluated on real anomalous time series in the test sets. The best and the second best results are highlighted in bold and underline, respectively}
    \label{tab:f1_score}
    \centering
    \begin{tabular}{c|ccccccc>{\columncolor{gray!15}}c}
    \toprule
    & 
\makecell{Random\\-Forest} &
\makecell{Cat\\-Boost} &
\makecell{Robust\\-TAD} &
TCN &
Moment &
\makecell{OneFits\\All} & 
LTSM & 
\makecell{Average}
\\
\midrule

\multicolumn{9}{c}{\textbf{MIT-DB}} \\
\midrule
\ourmethod & \textbf{0.905}&\textbf{0.830}&\textbf{0.828}&\textbf{0.653}&\textbf{0.741}&\textbf{0.945}&\textbf{0.919}&\textbf{0.832} \\
FlowTS & 0.655&0.507&0.701&0.607&0.464&\underline{0.910}&0.802&0.664\\
Diffusion-TS & \underline{0.823}&\underline{0.594}&0.732&0.683&0.499&\underline{0.910}&\underline{0.867}&\underline{0.730}\\
TimeVAE &0.575&0.480&0.687&0.682&0.509&0.832&0.740&0.644 \\
C-GATS &0.600&0.368&\underline{0.740}&\underline{0.707}&0.510&0.905&0.831&0.666  \\
GenIAS &0.480&0.425&0.733&0.424&\underline{0.531}&0.754&0.758&0.587  \\
\midrule

\multicolumn{9}{c}{\textbf{SV-DB}} \\
\midrule
\ourmethod & \textbf{0.941}&\textbf{0.943}&\textbf{0.983}&\textbf{0.870}&\textbf{0.911}&\textbf{0.979}&\textbf{0.917}&\textbf{0.935} \\
FlowTS & \underline{0.886}&\underline{0.819}&0.831&0.794&\underline{0.817}&\underline{0.974}&0.289&\underline{0.773} \\
Diffusion-TS & 0.825&0.757&0.918&\underline{0.824}&0.719&0.963&0.306&0.759  \\
TimeVAE & 0.746&0.763&\underline{0.921}&0.777&0.696&0.976&\underline{0.395}&0.753 \\
C-GATS & 0.777 & 0.59 & 0.663 & 0.098 & 0.617 & 0.843&0.223&0.544 \\
GenIAS & 0.620 & 0.434 & 0.063 & 0.657 & 0.268 & 0.832&0.382&0.465 \\
\midrule

\multicolumn{9}{c}{\textbf{QT-DB}} \\
\midrule
\ourmethod & \textbf{0.950} & \textbf{0.951} & \textbf{0.959} & \textbf{0.635} & \textbf{0.944} & \underline{0.975} & \textbf{0.861} & \textbf{0.896}\\
FlowTS & \underline{0.932} & \underline{0.915} & 0.938 & \underline{0.604} & \underline{0.837} & \textbf{0.977} & \underline{0.667} & \underline{0.839} \\
Diffusion-TS & 0.838 & 0.845 & 0.934 & 0.581 & 0.647 & 0.953 & 0.634 & 0.776 \\
TimeVAE & 0.654 & 0.455 & 0.583 & 0.548 & 0.497 & 0.932 & 0.503 & 0.596 \\
C-GATS & 0.913 & 0.893 & \underline{0.944} & 0.591 & 0.778 & 0.866 & 0.545 & 0.790\\
GenIAS & 0.475 & 0.374 & 0.000 & 0.205 & 0.29 & 0.92 & 0.505 & 0.396 \\
\midrule

\multicolumn{9}{c}{\textbf{PV}} \\
\midrule
\ourmethod & \textbf{0.225} & \textbf{0.125} & \textbf{0.607} & \underline{0.070} & \underline{0.369} & \textbf{0.432} & \underline{0.420} & \textbf{0.321}\\
FlowTS & 0.106 & 0.037 & 0.000 & 0.000 & \textbf{0.392} & 0.293 & 0.419 & \underline{0.178}\\
Diffusion-TS & 0.066 & 0.069 & 0.097 & 0.001 & 0.349 & 0.174 & 0.417 & 0.168 \\
TimeVAE &0.036&0.003&0.046&0.050&0.286&0.172 & 0.419 & 0.145 \\
C-GATS & 0.002 & 0.009&0.058&0.000&0.366&\underline{0.336} & \textbf{0.472} & 0.178\\
GenIAS & 0.000 & 0.000 & 0.000& \textbf{0.075}&0.233&0.170&0.400&0.125 \\
\midrule

\multicolumn{9}{c}{\textbf{Metro}} \\
\midrule
\ourmethod & \underline{0.662}&0.096&\textbf{0.677}&\textbf{0.497}&\textbf{0.629}&\underline{0.698} & \textbf{0.769} & \textbf{0.575}\\
FlowTS & \textbf{0.673}&\underline{0.128}&0.388&\underline{0.435}&\underline{0.615}&\textbf{0.700} & 0.755 & \underline{0.528}\\
Diffusion-TS & 0.468&\textbf{0.230}&0.257&0.362&0.603&0.606 & \underline{0.762} & 0.470\\
TimeVAE & 0.068&0.110&0.000&0.000&0.341&0.430 & 0.647 & 0.228 \\
C-GATS & 0.354&0.096&\underline{0.435}&0.332&0.562&0.602 & 0.650 & 0.433\\
GenIAS & 0.000 & 0.000 & 0.000 & 0.000 & 0.204 & 0.459 & 0.602 & 0.181 \\
\bottomrule
\end{tabular}   
\end{table}

\subsection{Main Results}

Table~\ref{tab:all_metrics} reports both point-wise fidelity (MSE, MAE) and distribution matching metrics (FID, KID, MMD) across five real-world datasets. \ourmethod\ consistently achieves the best MSE and MAE on almost all datasets. \ourmethod\ also demonstrates strong performance in distribution matching, achieving the best results in almost all cases across FID, KID, and CMMD. Notably, the improvements are consistent under both strong-anomaly settings (MIT-DB, SVDB, and QTDB) and weak-anomaly settings (PV and Metro), indicating that \ourmethod\ is robust to varying anomaly characteristics and generalizes well across domains. Table~\ref{tab:f1_score} shows the utility of generated anomalies for downstream anomaly detection by showing F1-scores on the test set. Across all datasets and detection models,
\ourmethod\ consistently yields the highest average F1-scores,
often with a clear margin over baseline methods. The performance gains observed for different detectors suggest the benefits of our generated anomalies are not due to the bias of the detector but exhibit a better distributional similarity.

\subsection{Ablation Study}

\begin{wraptable}{r}{0.48\textwidth}
\centering
\setlength{\tabcolsep}{3pt}
\caption{Ablation study of model components. ASC denotes anomaly structure conditioning, and CUP denotes context-unaware pretraining. The best metrics are highlighted in bold.}
\label{tab:ablation_study}
\begin{tabular}{c ccc ccc}
\toprule
&\multicolumn{3}{c}{Components} &
\multicolumn{3}{c}{Metrics} \\
\cmidrule(lr){2-4} \cmidrule(lr){5-7}
&Main & ASC & CUP & MSE$\downarrow$ & MAE$\downarrow$ & F1$\uparrow$ \\
\midrule
(a)&$\checkmark$ &        &        &0.023&0.080&0.595\\
(b)&$\checkmark$ & $\checkmark$ &        &0.040&0.080&0.599 \\
(c)&$\checkmark$ &        & $\checkmark$ &0.021&0.072&0.686 \\
(d)&$\checkmark$ & $\checkmark$ & $\checkmark$ & \textbf{0.011} & \textbf{0.053} & \textbf{0.711} \\
\bottomrule
\end{tabular}
\end{wraptable}

Table~\ref{tab:ablation_study} presents an ablation study to analyze the effectiveness of anomaly structure conditioning (ASC) and context-unaware pretraining (CUP) in \ourmethod. Comparing (a) and (b), introducing ASC without pretraining fails to improve the performance. This indicates that discrete anomaly structures are difficult to exploit when the generator is directly trained under anomaly scarcity. Comparing (a) and (c), CUP consistently improves all metrics. This demonstrates that leveraging abundant normal data before learning context-dependent anomaly generation is beneficial. Comparing (c) and (d), adding ASC on top of CUP yields further improvements. It shows that the flow matching model learns to exploit the anomaly representation with ASC, hence leading to better generation fidelity and downstream utility. Overall, the best performance is achieved when ASC and CUP are combined, validating the necessity of modeling anomaly generation as a context- and anomaly structure- conditioned framework and adopting the proposed pretraining-finetuning strategy.


\begin{figure}[bht]
\centering

\begin{subfigure}[t]{0.49\linewidth}
    \centering
    \includegraphics[width=\linewidth]{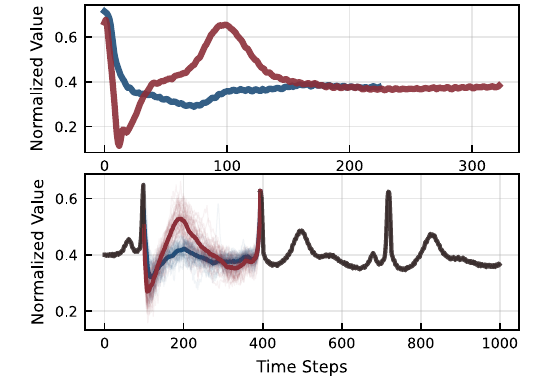}
    \caption{Same context; Different anomaly structure latent}
    \label{case_study_left}
\end{subfigure}
\hfill
\begin{subfigure}[t]{0.49\linewidth}
    \centering
    \includegraphics[width=\linewidth]{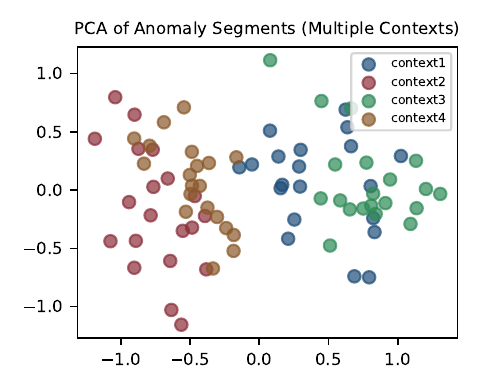}
    \caption{Same anomaly structure latent; Different contexts.}
    \label{case_study_right}
\end{subfigure}

\caption{Controllability of anomaly generation. Left (\ref{case_study_left}): varying anomaly structure while fixing the normal context produces diverse anomaly patterns. The upper panel shows two anomaly segments. The lower panel shows generated time series conditioned on the same normal context but on these two different anomaly structures. Right (\ref{case_study_right}): varying the normal context while fixing the anomaly structure preserves the anomaly pattern but adapts it to different contexts.}
\label{fig:case_study}
\end{figure}

\subsection{Qualitative Analysis}


\begin{figure}[t]
  \begin{center}
\centerline{\includegraphics[width=0.7\linewidth]{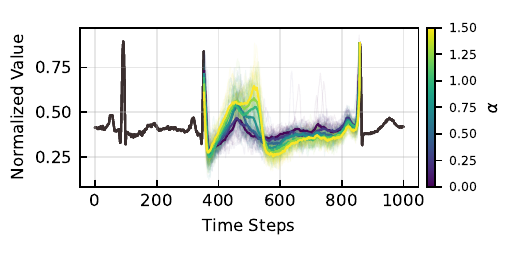}}
\caption{
Generated anomalies conditioned on the same context with interpolated latent codes $u_{\alpha} = \alpha u_1 + (1-\alpha)u_0$, where $\alpha \in \{0.0, 0.25, 0.5, 0.75, 1.0, 1.25, 1.5\}$ denotes the interpolation coefficient; $u_0$ and $u_1$ are two inferred latent variables from seen anomalies. Anomaly morphology varies smoothly with latent interpolation $\alpha$,
while the normal context remains unchanged.
}
\label{fig:interpolation_std_mask}
  \end{center}
\end{figure}
Figure~\ref{fig:case_study} illustrates how \ourmethod\ controls anomaly generation through disentangled representations of normal context and anomaly structure. In Figure~\ref{case_study_left}, we fix the normal context while varying the anomaly-structure latent. We observe that the generated time series exhibit distinct anomaly morphologies: the red curve shows a pronounced rising–falling pattern with a clear peak, whereas the blue curve exhibits a smoother and flatter structure. This demonstrates that \ourmethod\ effectively leverages the learned anomaly-structure latent to produce diverse yet coherent anomaly patterns under the same context. Importantly, the generated anomalies do not simply replicate the given anomaly segments, but instead adapt them to the surrounding normal context while preserving their structural characteristics. In Figure~\ref{case_study_right}, we instead fix the anomaly-structure latent and vary the normal context. The PCA visualization shows that anomaly segments generated under different contexts form distinct clusters, corresponding to different contextual conditions. 
This suggests that \ourmethod\ preserves the semantic structure of anomalies while still remaining responsive to the normal context, rather than ignoring it.

Figure~\ref{fig:interpolation_std_mask} further investigates the structure of the learned latent space by interpolating between two anomaly-inferred latent codes, i.e., $u_{\alpha} = \alpha u_1 + (1-\alpha)u_0$. As the interpolation coefficient $\alpha$ varies, the generated anomalies exhibit smooth and continuous changes in morphology, while the surrounding normal context remains unchanged. This behavior indicates that the flow matching model can capture the semantic meaning of anomaly structure latent, rather than merely memorizing isolated patterns. It indicates that \ourmethod\ has the potential to synthesize novel anomaly morphologies beyond those observed during training.

\section{Related Works}
\subsection{Time Series Anomaly Generation}



Time series anomaly generation aims to synthesize anomalous samples to alleviate label scarcity and improve downstream performance. Early work primarily relies on rule-based anomaly injection methods \cite{yun2019cutmix,zhang2017mixup,turowski2022modeling}, which are simple and efficient but offer limited diversity and fail to capture the complex distribution of real anomalies.

Recent learning-based approaches mainly follow GAN- and VAE-based paradigms \cite{zhao2024sgad,9378139,singh2022c,darban2025genias}. GAN-based methods typically treat generation as an auxiliary component for anomaly detection, rather than explicitly modeling anomaly distributions, and often lack fine-grained controllability. VAE-based methods, such as GenIAS \cite{darban2025genias} and C-GATS \cite{singh2022c}, attempt to mitigate anomaly scarcity via latent perturbation or pretraining–finetuning strategies, but their generation quality is constrained by limited expressiveness and incomplete modeling of anomaly distributions.

\subsection{Time Series Anomaly Detection}
\citet{liu2024elephant} groups time series anomaly detection (TSAD) algorithms into three categories, unsupervised, semi-supervised, and supervised. Unsupervised methods \cite{liu2008isolation,breunig2000lof, paffenroth2018robust,goldstein2012histogram,boniol2021unsupervised,yeh2018time} detect anomalies as deviations from global or local data distributions without requiring labeled anomaly data. Similar to unsupervised methods, Semi-supervised methods works well under anomaly data scarcity \cite{su2019robust,audibert2020usad,malhotra2015long,sakurada2014anomaly}, learning normal temporal dynamics during training and identify anomalies as deviations at test time. 



Supervised TSAD methods rely on labeled normal and anomalous data \cite{gao2020robusttad,ren2019time,carmona2021neural}, but their effectiveness is often limited by the scarcity of anomaly annotations. To alleviate this issue, recent work explores large-scale representation learning, including time series foundation models \cite{yue2022ts2vec,tonekaboni2021unsupervised,wu2022timesnet,goswami2024moment} and large language models \cite{sui2025training,zhou2023one,chuang2024ltsm}. Despite strong representation power, supervised anomaly detection with these models remains constrained by limited anomalous data. Our work directly addresses this limitation by generating realistic and controllable anomalous time series to support data augmentation for supervised and representation-based TSAD.

\subsection{General Time Series Generation}

Early time series generation methods were primarily based on VAEs and GANs. VAE-based approaches emphasize training stability and structured latent representations, including models with explicit temporal decomposition \cite{desai2021timevae} and latent linear dynamics inspired by Koopman theory \cite{naiman2024generative}. In contrast, GAN-based methods focus on matching complex temporal distributions, with representative works incorporating supervised embedding losses \cite{yoon2019time}, causal optimal transport constraints \cite{xu2020cot}, and self-attention mechanisms \cite{jeha2022psa}. Despite strong empirical performance, GAN-based models often suffer from mode collapse and limited controllability.


Diffusion and flow matching have recently become prominent approaches for time series generation due to their stable training dynamics. Diffusion-based methods have explored diverse conditioning and architectural designs, including transformer-based diffusion \cite{yuan2024diffusionts}, structured state-space guidance \cite{kollovieh2023predict}, autoregressive and mixup conditioning \cite{shen2023non}, vision-inspired diffusion \cite{naiman2024utilizing}, and masked diffusion for irregular time series \cite{fadlon2025diffusion}. Despite strong empirical performance, diffusion models remain computationally expensive. Flow matching provides a more efficient alternative, as demonstrated by FlowTS \cite{hu2024flowts} and subsequent extensions \cite{zhang2025towards}.

\section{Conclusion}
In this work, we study the problem of anomalous time series generation. We observe that the challenges of anomalous time series generation stem both from data scarcity and anomaly morphology variability. To address the challenges, we propose \ourmethod, a \textbf{c}ontext- and \textbf{a}nomaly \textbf{s}tructure-conditioned \textbf{t}ime series generation framework with a principled pretraining-finetuning strategy.
The pretraining-finetuning strategy can take advantage of abundant normal data. Moreover, conditioning on normal context and anomaly structure enables generation with controllable morphologies. Extensive experiments on multiple benchmarks demonstrate that CAST consistently outperforms existing anomaly generation methods, with a 26\% decrease in MSE and 17\% decrease in MAE regarding generation fidelity and a 16\% increase in F1-score regarding anomaly detection compared to baselines. Future work includes extending the framework to incorporating additional data modalities, and exploring its application to downstream tasks. Beyond quantitative improvements, qualitative analyses suggest that CAST can learn and use a semantically organized anomaly latent space, where continuous latent manipulations give rise to consistent morphological variations. Such structure supports flexible recombination of anomaly patterns beyond the observed anomaly patterns.

\section*{Acknowledgment}
This research was partially funded by the National Institutes of Health (NIH) under award 1OT2OD038051. The views and conclusions contained in this document are those of the authors and should not be
interpreted as representing the official policies, either expressed or implied, of the NIH.

\newpage
\bibliography{999_reference}
\bibliographystyle{style/icml2025}

\titlespacing*{\section}{0pt}{*1}{*1}
\titlespacing*{\subsection}{0pt}{*1.25}{*1.25}
\titlespacing*{\subsubsection}{0pt}{*1.5}{*1.5}

\setlength{\abovedisplayskip}{\baselineskip}
\setlength{\abovedisplayshortskip}{0.5\baselineskip}
\setlength{\belowdisplayskip}{\baselineskip}
\setlength{\belowdisplayshortskip}{0.5\baselineskip}

\clearpage
\appendix
\label{sec:append}
\part*{Appendix}
{
\setlength{\parskip}{-0em}
\startcontents[sections]
\printcontents[sections]{ }{1}{}
}

\setlength{\parskip}{.5em}
\section{Datasets \label{sec:appendix_dataset}}
All five datasets we adopted herein are open to public. Table~\ref{tab:dataset_statistics} shows the statistics of all five datasets. Below we provide the link to download the datasets:
\begin{itemize}
    \item MIT-BIH Arrhythmia Database (MIT-DB): 
    
    \url{https://physionet.org/content/mitdb/1.0.0/}
    \item QT Database (QT-DB): 
    
    \url{https://physionet.org/content/qtdb/1.0.0/}
    \item MIT-BIH Supraventricular Arrhythmia Database (SVDB): 
    
    \url{https://physionet.org/content/svdb/1.0.0/}
    \item photovoltaic (PV) active power generation: 
    \begin{enumerate}
        \item \url{https://www.solar.sheffield.ac.uk/pvlive/}
        \item \url{https://pages.nist.gov/netzero/data.html#download_data} 
    \end{enumerate}
    \item UCI Metro Interstate Traffic Volume dataset:
    
    \url{https://archive.ics.uci.edu/dataset/492/metro+interstate+traffic+volume}
\end{itemize}

\begin{table}[!ht]
    \centering
    \caption{Statistics of the time series datasets used in our experiments.}
    \label{tab:dataset_statistics}
    \begin{tabular}{lcccc}
        \toprule
        Dataset
        & Dimension
        & \makecell{Sequence\\ Length}
        & \makecell{Minimum \\Anomaly Length}
        & \makecell{Maximum \\Anomaly Length} \\
        \midrule
        MIT-DB & 2  & 1000 & 180  & 800 \\
        QT-DB & 2  & 600  & 80  & 450 \\
        SV-DB & 2  & 800 & 30  & 360 \\
        PV & 1 & 200 & 20 & 144 \\
        Metro & 1 & 72 & 24 & 24 \\
        
        \bottomrule
    \end{tabular}
\end{table}




\section{More Implementation Details \label{appendix:implementation_details}}
All experiments in this paper can run on a NVIDIA RTX A6000 with 48GB GPU memory. The training time of each method takes less than 24 hours.

\ourmethod\ adopts an encoder--decoder Transformer architecture. The base architecture is adapted from FlowTS~\cite{hu2024flowts}. 
\ourmethod\ incorporates three types of conditioning: time conditioning, contextual conditioning, and anomaly structure conditioning. Time conditioning is implemented via adaptive normalization, where the continuous time variable is embedded and injected into each Transformer block to modulate the hidden representations.
Contextual conditioning is achieved by masking the input sequence and restricting the flow matching objective to the corrupted regions, enabling the model to learn context-aware generation and imputation.
Anomaly structure conditioning is realized by conditioning the velocity field on discrete latent representations extracted from a pretrained vector-quantized encoder. The latent representation is injected to the model via adaptive normalization as well. 

For all datasets, VQ-VAE in \ourmethod\ is trained on mixed normal and anomalous segments with a codebook size of 500. The encoder maps each time series into a sequence of 4 discrete codes with embedding dimension 8, which are later used as fixed anomaly-structure representations.The flow model is parameterized by an encoder--decoder Transformer with $4$ encoder layers and $4$ decoder layers. All Transformer blocks use a hidden dimension of $64$ with $4$ attention heads and a feed-forward expansion ratio of $4$. The pesudo code for \ourmethod's\ finetuning and inference are given in Algorithm~\ref{alg:scfm_train} and Algorithm~\ref{alg:vrfm_inference}, respectively. \ourmethod is pretrained for up to 100 epochs and tuned for up to 500 epochs with early stop.

We use the codebases below to implement the corresponding baselines
\begin{enumerate}
    \item \textbf{FlowTS}: https://github.com/UNITES-Lab/FlowTS.git
    \item \textbf{Diffusion-TS}: https://github.com/Y-debug-sys/Diffusion-TS.git
    \item \textbf{TimeVAE}: https://github.com/wangyz1999/timeVAE-pytorch.git
\end{enumerate}
To optimize their performance in our experiment setting, these three models are trained to do context-conditional anomalous time series generation, i.e., imputation. For fair comparison, FlowTS and Diffusion-TS adopt the same model size as \ourmethod, i.e., an encoder--decoder Transformer with $4$ encoder layers and $4$ decoder layers, where each Transformer block uses a hidden dimension of $64$ with $4$ attention heads and a feed-forward expansion ratio of $4$. 
TimeVAE uses a convolutional variational autoencoder with explicit trend and seasonal decomposition. The encoder consists of three convolutional layers with hidden sizes $\{50,100,200\}$ and maps an input sequence into a latent space of dimension $64$. The decoder reconstructs the signal by combining a level component, a third-order polynomial trend component, multiple seasonal components with custom periodicities, and a residual convolutional decoder.
The KL divergence term is weighted by $10^{-4}$ during training.

We implement GenIAS using a perturbation-based VAE. GenIAS uses the same architecture as TimeVAE. The encoder consists of convolutional layers with hidden sizes $\{50,100,200\}$ and maps the input into a latent space of dimension $64$.
The decoder reconstructs the signal by combining a third-order polynomial trend component and multiple seasonal components with custom periodicities. 
During training, we set $\delta_{min}=0.01$, $\delta_{max}=0.02$, and $\sigma_{prior}=0.5$. The weights for KL loss and perturbation loss are $10^{-4}$ and 0.1, respectively. 

To implement C-GATS, we implement a CNN-VAE with a 1D ResNet-style encoder--decoder architecture. The encoder consists of four residual convolutional stages with channel sizes $\{64,64,64,64\}$ and outputs a sequence of latent variables of length $8$ with embedding dimension 64. The decoder mirrors this structure using residual upsampling blocks with channel sizes $\{64,64,32,32,16,16\}$ and reconstructs the signal to the original temporal resolution. KL divergence term is weighted by $10^{-4}$.

Hyperparameters of all baselines have been tuned maticulously for the optimal performance of baselines. All baselines are trained for up to 2000 epochs with early stop. All models are trained using the Adam optimizer with an initial learning rate $10^{-4}$.
We apply a ReduceLROnPlateau learning rate scheduler that monitors the training loss and reduces the learning rate by a factor of $0.8$ if no improvement larger than $10^{-4}$ is observed for one epoch.
The learning rate is lower-bounded by $10^{-5}$ to ensure stable convergence. All relevant code will be released upon the acceptance of the paper.

    

\begin{algorithm}[!ht]

\caption{CAST Finetuning}

\label{alg:scfm_train}

\begin{algorithmic}[1]

\State \textbf{Input:} anomalous time series dataset $\mathcal{D}_a$, a well-trained latent encoder $E(\cdot)$

\State \textbf{Initialize:} flow matching model $v_\theta(\cdot)$

\While{not converged}

    \State Sample anomalous time series $(x_1, m) \sim \mathcal{D}_a$
    \State Sample noise $x_0 \sim p_0$ and time $t \sim \mathcal{U}(0,1)$
    \State $x_{\mathrm{ano}} = x_1 \circ m$

    \State $x_{\mathrm{ctx}} = x_1 \circ (1 - m)$
    \State $u = E(x_{\mathrm{ano}})$
    \State $x_t = x_{\mathrm{ctx}} + (1 - t)x_0 \circ m + t x_{\mathrm{ano}}$
    \State $y = (x_{\mathrm{ano}} - x_0) \circ m$ \Comment{Compute target velocity}
    \State Update $\theta$ by minimizing:

    \State $\displaystyle
        \left\|
        v_\theta(x_t, u, x_{\mathrm{ctx}}, t)
        - y
        \right\|_2^2
    $
\EndWhile
\end{algorithmic}
\end{algorithm}

\begin{algorithm}[!ht]
\caption{CAST Inference}
\label{alg:vrfm_inference}
\begin{algorithmic}[1]
\State \textbf{Input:} conditional flow matching $v_\theta(\cdot)$, discrete encoder $E(\cdot)$, source distribution $p_0$, predefined maske distribution $P_m$, anomaly dataset $\mathcal{D}_a$, normal dataset $\mathcal{D}_n$
\State sample anomalous time series $(\tilde{x}, \tilde{m})\sim\mathcal{D}_a $
\State construct $\tilde{x}_{\mathrm{ano}}=\tilde{x}\circ \tilde{m}$
\State get latent $u = E(\tilde{x})$

\State sample normal time series $x \sim \mathcal{D}_n$
\State sample anomaly mask $m\sim P_m$
\State construct normal context $x_{\mathrm{ctx}}=x\circ (1-m)$
\State sample noise $\varepsilon \sim p_0$
\State construct $x_0=x_{\mathrm{ctx}}+\varepsilon\circ m$
\State ODE integrate $x_0$ from $t = 0$ to $t = 1$ using velocity field $v_\theta(x_t, u, x_{\mathrm{ctx}},t)$
\end{algorithmic}
\end{algorithm}

\section{Anomaly Detector Implementation Details\label{TSAD_implementation}}

This section provides implementation details of the anomaly detectors used in our experiments.
All detectors perform \emph{point-wise anomaly detection}: given an input multivariate time series of length $T$, the model predicts a binary anomaly label for each timestep.
Unless otherwise specified, detectors are trained on generated data and evaluated on real data. All experiments are repeated five times with different random seeds, and the average results are reported.

\paragraph{Random Forest.}
The Random Forest (RF) detector is implemented using the \texttt{sklearn} \texttt{RandomForestClassifier}.
For each timestep, a fixed-length temporal window of size $2w{+}1$ ($w=\texttt{feat\_window\_size}$) is extracted with reflection padding at sequence boundaries.
From each window, we construct a feature vector by concatenating:
(1) raw values within the window,
(2) per-channel mean,
(3) per-channel standard deviation,
and (4) mean of first-order temporal differences.
The resulting features are flattened and used for point-wise classification.
The RF model uses 300 trees, no depth limit, a minimum leaf size of 10, and balanced class weights.

\paragraph{CatBoost.}
The CatBoost detector uses the same window-based feature extraction as the Random Forest baseline.
We employ a \texttt{CatBoostClassifier} with 500 boosting iterations, tree depth 8, learning rate 0.05, and the \texttt{Logloss} objective.
Class imbalance is handled via automatic class weight balancing.

\paragraph{Robust-TAD.}
Robust-TAD is implemented as a U-Net–style fully convolutional network for time series anomaly detection.
The encoder consists of stacked 1D convolutional blocks with channel sizes $\{16, 32, 64, 128, 256\}$ and max-pooling for temporal downsampling.
The decoder mirrors the encoder using transposed convolutions and skip connections to recover temporal resolution.
The model outputs per-timestep anomaly logits.
Training uses a weighted binary cross-entropy loss with logits, where the positive class weight is set according to the anomaly ratio in the training data.
Optimization is performed with Adam, together with learning rate scheduling based on validation loss and early stopping.

\paragraph{TCN.}
The Temporal Convolutional Network (TCN) detector consists of a stack of dilated causal convolutional blocks with exponentially increasing dilation factors.
We use six layers with kernel size 3 and a hidden dimension of 128.
A $1\times1$ convolutional head maps the hidden representations to per-timestep anomaly logits.
Training follows the same protocol as Robust-TAD, using weighted binary cross-entropy loss, Adam optimization, learning rate scheduling, and early stopping.

\paragraph{MOMENT.}
For the MOMENT-based detector, we adopt a pretrained MOMENT time series foundation model as a frozen backbone and train a lightweight classification head.
Input sequences are resampled to a fixed length of 512 to match the pretrained model requirements.
Patch-level embeddings produced by MOMENT are mapped to per-timestep anomaly logits via a multilayer perceptron head.
We use cross-entropy loss for training and perform linear probing, keeping the backbone frozen.

\paragraph{OneFitsAll.}
The OneFitsAll detector is implemented using a GPT-based time series model.
Input sequences are patchified with stride 1 and embedded before being processed by the first six layers of a pretrained GPT-2 backbone.
Only layer normalization and MLP parameters are fine-tuned, while the remaining parameters are frozen.
A linear head predicts per-timestep anomaly logits.
Training uses weighted binary cross-entropy loss with logits.
Learning rate scheduling and early stopping are based on validation F1 score.

\paragraph{LTSM.}
The LTSM detector is an LLM-based time series model with patch tokenization.
Input sequences are normalized per sample and divided into overlapping patches for each variable channel.
Patch embeddings are projected to the hidden dimension of a pretrained transformer backbone, where only the first few layers are retained.
For anomaly detection, hidden representations are aggregated across variable channels and mapped to per-timestep anomaly logits via a linear head.
Training minimizes point-wise binary cross-entropy with logits.
Training and evaluation are implemented using the HuggingFace \texttt{Trainer}, with optional early stopping and selection of the best checkpoint based on evaluation loss.

\section{Theoretical Analysis \label{app:theory}}

In this section, we present a theoretical analysis to shed light on the intuition behind our principled conditioning design. We emphasize that the goal is to provide conceptual insight, rather than to establish a fundamentally novel theoretical contribution. The focus of this work is still empirical.

\paragraph{Notations and Definitions}
Let $(\Omega,\mathcal F,\mathbb P)$ be a probability space.
We consider random variables
$X_0 : \Omega \to \mathcal{X}_{\mathrm{ano}}$ and
$(X_1, X_{\mathrm{ctx}}, Z) : \Omega \to \mathcal{X}_{\mathrm{ano}} \times \mathcal{X}_{\mathrm{ctx}} \times \mathcal Z$,
where $X_0 \sim \pi_0$ and $(X_1, X_{\mathrm{ctx}}, Z) \sim P_1$.
Here, $X_0$ denotes a reference random variable drawn from the base distribution $\pi_0$ (e.g., standard Gaussian distribution),
$X_1$ denotes an anomalous time series,
$X_{\mathrm{ctx}}$ represents the corresponding normal context,
and $Z$ is an oracle latent variable characterizing anomaly structure.

For any pair of random variables $(A,B)$, conditional variance is defined as
\begin{align*}
\mathrm{Var}(A \mid B)
\;:=\;
\mathbb E\!\left[\|A-\mathbb E[A\mid B]\|^2 \,\middle|\, B\right],
\end{align*}
which is a $B$-measurable random variable. Define the target random variable $Y := X_1 - X_0$.
For each $t\in[0,1]$, we define the interpolated random variable $X_t := (1-t)X_0 + t X_1$, and let $W_t := (X_t, X_{\mathrm{ctx}})$ denote the observable conditioning information.
We further define the $\sigma$-algebras $\mathcal F_t := \sigma(W_t, Z)$ and $\mathcal G_t := \sigma(W_t, U)$, where $\sigma(\cdot)$ denotes the $\sigma$-algebra generated by the enclosed random variables,
and $U$ denotes an inferred latent representation. Following the flow matching framework, we consider an ordinary differential equation
defined on $t\in[0,1]$: 
\begin{align*}
\mathrm d S_t = v_c(S_t, X_{\mathrm{ctx}}, C, t)\,\mathrm dt,
\end{align*} 
where $S_t$ is a state random variable,$v_c(\cdot)$ is a conditional vector field,
and $C$ denotes a generic conditioning variable other than normal context $X_{\mathrm{ctx}}$. Given a conditioning variable $C$, the conditional vector field
$v_c$ is learned by minimizing the flow matching objective
\begin{align*}
\mathcal{L}_0(v_c)
\;=\;
\int_0^1
\mathbb E\!\left[
\| Y - v_c(X_t, X_{\mathrm{ctx}}, C, t) \|^2
\right]\mathrm dt .
\end{align*}
The population minimizer satisfies
\begin{align*}
v_c^*(X_t, X_{\mathrm{ctx}}, C, t)
\;=\;
\mathbb E\!\left[Y \mid W_t, C\right].
\end{align*}

Consequently, the optimal conditional flow matching objective is 
\begin{align*}
\mathcal{L}_0(v_c^*)
=
\int_0^1
\mathbb E\!\left[
\mathrm{Var}(Y \mid W_t, C)
\right]\mathrm dt .
\end{align*}

\begin{assumption}[Oracle Sufficiency]\label{ass:oracle_suff}
For each $t \in [0,1]$, we assume that
\[
Y \perp U \mid \mathcal F_t,
\]
where $\perp$ denotes conditional independence.
That is, conditioned on the observable information $W_t$ and the oracle latent
variable $Z$, the inferred representation $U$ does not provide any additional
predictive information about $Y$.

Equivalently, the following identities hold:
\begin{align*}
    \mathbb E\!\left[ Y \mid W_t, U, Z \right]
    &= \mathbb E\!\left[ Y \mid W_t, Z \right], \\
    \mathrm{Var}\!\left( Y \mid W_t, U, Z \right)
    &= \mathrm{Var}\!\left( Y \mid W_t, Z \right).
\end{align*}
\end{assumption}

\begin{assumption}[Smooth Latent Space]\label{ass:lipschitz_latent}
Let $W_t = (X_t, X_{\mathrm{ctx}})$ and define the conditional mean function
\[
m_t(w,z) := \mathbb E\!\left[ Y \mid W_t = w,\, Z = z \right],
\]
where $w$ belongs to the support of $W_t$.
We assume that there exists a constant $L > 0$ such that for all $t \in [0,1]$,
for almost every $w$, and for all $z, z'$,
\[
\| m_t(w,z) - m_t(w,z') \| \le L \, \| z - z' \|.
\]
\end{assumption}


\begin{lemma}[Optimal Conditional Flow Matching Objective Reduction under Inferred Latents]\label{thm:gain_U}
Let $X_0\sim\pi_0$ and $(X_1,X_{\mathrm{ctx}})\sim P_1$ be random variables, and define
$Y:=X_1-X_0$. For each $t\in[0,1]$, define the interpolant
$X_t:=(1-t)X_0+tX_1$ and the joint conditioning random variable
$W_t:=(X_t,X_{\mathrm{ctx}})$.
Let $v^*$ denote the population minimizer of the conditional flow matching objective
conditioned on $W_t$ (i.e., $v^*(X_t,X_{\mathrm{ctx}},t)=\mathbb E[Y\mid W_t,t]$), and let
$v_u^*$ denote the population minimizer conditioned on $(W_t,U)$ (i.e.,
$v_u^*(X_t,X_{\mathrm{ctx}},U,t)=\mathbb E[Y\mid W_t,U,t]$).
Then the optimal risks satisfy the exact decomposition
\begin{align}
\mathcal L_0(v_u^*)
=
\mathcal L_0(v^*)
-
\int_0^1 
\mathbb E\!\left[
\mathrm{Var}\!\left(\mathbb E[Y\mid W_t,U]\mid W_t\right)
\right] dt .
\end{align}
In particular, conditioning on $U$ is guaranteed to not increase the optimal flow matching objective, i.e., $\mathcal L_0(v_u^*)\le \mathcal L_0(v^*)$.
\end{lemma}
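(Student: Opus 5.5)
The plan is to prove the identity pointwise in $t$ and then integrate over $t\in[0,1]$. Fix $t$. We assume $\mathbb E\|Y\|^2<\infty$, which holds when $\pi_0$ and the $X_1$-marginal of $P_1$ have finite second moments. From the preliminaries preceding the statement, the optimal risk with conditioning variable $C$ equals $\int_0^1 \mathbb E[\mathrm{Var}(Y\mid W_t,C)]\,dt$. Applying this with $C$ trivial and with $C=U$ gives
\[
\mathcal L_0(v^*)=\int_0^1\mathbb E[\mathrm{Var}(Y\mid W_t)]\,dt,
\qquad
\mathcal L_0(v_u^*)=\int_0^1\mathbb E[\mathrm{Var}(Y\mid W_t,U)]\,dt .
\]
It therefore suffices to show, for each $t$,
\[
\mathbb E[\mathrm{Var}(Y\mid W_t)]=\mathbb E[\mathrm{Var}(Y\mid W_t,U)]+\mathbb E\bigl[\mathrm{Var}(\mathbb E[Y\mid W_t,U]\mid W_t)\bigr].
\]
This is the law of total variance applied conditionally on $\sigma(W_t)\subseteq\sigma(W_t,U)$.

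To verify it, write $A:=\mathbb E[Y\mid W_t,U]$ and $B:=\mathbb E[Y\mid W_t]$. The tower property gives $B=\mathbb E[A\mid W_t]$. Decompose $Y-B=(Y-A)+(A-B)$ and expand $\|Y-B\|^2$ under $\mathbb E[\cdot\mid W_t]$:
\[
\mathbb E[\|Y-B\|^2\mid W_t]=\mathbb E[\|Y-A\|^2\mid W_t]+\mathbb E[\|A-B\|^2\mid W_t]+2\,\mathbb E[\langle Y-A,\,A-B\rangle\mid W_t].
\]
The cross term vanishes. Conditioning first on $(W_t,U)$, the factor $A-B$ is $\sigma(W_t,U)$-measurable and $\mathbb E[Y-A\mid W_t,U]=0$; the tower property then removes the outer conditioning. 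Taking expectations, the first term on the right becomes $\mathbb E[\mathrm{Var}(Y\mid W_t,U)]$ by the tower property, and the second is $\mathrm{Var}(A\mid W_t)$ by definition.

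Integrating this identity over $t\in[0,1]$ gives the stated decomposition. Fubini applies since every term is nonnegative and bounded by $\mathbb E\|Y\|^2$. The inequality $\mathcal L_0(v_u^*)\le\mathcal L_0(v^*)$ follows because the subtracted term is the integral of an expectation of a nonnegative quantity.

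The only delicate point is measure-theoretic bookkeeping rather than any real obstacle. One must justify that the conditional expectations exist and are square-integrable, which follows from $\mathbb E\|Y\|^2<\infty$ and is why this hypothesis is needed. One must also check that the minimizer characterization $v_c^*=\mathbb E[Y\mid W_t,C]$ holds jointly measurably in $t$, so that the integrals over $t$ are well defined. Neither Assumption~\ref{ass:oracle_suff} nor Assumption~\ref{ass:lipschitz_latent} is needed here; they presumably enter later results comparing $U$ to the oracle $Z$.
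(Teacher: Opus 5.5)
Your proposal is correct and follows essentially the same route as the paper. Both proofs write the two optimal risks as time-integrated expected conditional variances, apply the conditional law of total variance to $Y$ given $W_t$ refined by $U$, then take expectations and integrate over $t$. Where you add to the paper is in deriving the total-variance identity through the orthogonal decomposition $Y-B=(Y-A)+(A-B)$ and in making the hypothesis $\mathbb E\|Y\|^2<\infty$ explicit; the paper cites the identity directly and leaves integrability implicit.
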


\begin{proof}
By definition of the optimal flow matching objective under squared loss,
\[
\mathcal L_0(v^*)=\int_0^1 \mathbb E\!\left[\mathrm{Var}(Y\mid W_t)\right]dt,
\qquad
\mathcal L_0(v_u^*)=\int_0^1 \mathbb E\!\left[\mathrm{Var}(Y\mid W_t,U)\right]dt.
\]
For any fixed $t$, applying the conditional law of total variance to $Y$ given $W_t$ yields
\begin{align}
\mathrm{Var}(Y\mid W_t)
=
\mathbb E\!\left[\mathrm{Var}(Y\mid W_t,U)\mid W_t\right]
+
\mathrm{Var}\!\left(\mathbb E[Y\mid W_t,U]\mid W_t\right).
\label{eq:ctv_gainU}
\end{align}
Taking expectation of~\eqref{eq:ctv_gainU} over $W_t$ and integrating over $t\in[0,1]$, we obtain
\begin{align}
\mathcal L_0(v^*)
&= \int_0^1 \mathbb E\!\left[\mathrm{Var}(Y \mid W_t)\right] dt \notag\\
&= \int_0^1 \mathbb E_{W_t}\!\left[\mathbb E_{U}\!\left[\mathrm{Var}(Y\mid W_t,U)\mid W_t\right]\right] dt
\;+\;
\int_0^1 \mathbb E_{W_t}\!\left[\mathrm{Var}\!\left(\mathbb E_{Y}[Y\mid W_t,U]\mid W_t\right)\right] dt \notag\\
&= \int_0^1 \mathbb E_{W_t,U}\!\left[\mathrm{Var}(Y\mid W_t,U)\right] dt
\;+\;
\int_0^1 \mathbb E_{W_t}\!\left[\mathrm{Var}\!\left(\mathbb E_{Y}[Y\mid W_t,U]\mid W_t\right)\right] dt \notag\\
&= \mathcal L_0(v_u^*)
+
\int_0^1 \mathbb E_{W_t}\!\left[\mathrm{Var}\!\left(\mathbb E_Y[Y\mid W_t,U]\mid W_t\right)\right] dt,
\end{align}
where the third line uses the tower property
$\mathbb E[\mathbb E[\cdot\mid W_t]]=\mathbb E[\cdot]$.
Rearranging proves the claimed decomposition.
\end{proof}

\begin{theorem}[Optimal Conditional Flow Matching Objective Decomposition under Inferred Latent]
\label{thm:formal_risk_decomp}
Let $Y := X_1 - X_0$ and $W_t := (X_t, X_{\mathrm{ctx}})$.
Suppose Assumption~\ref{ass:oracle_suff} (Oracle Sufficiency) and
Assumption~\ref{ass:lipschitz_latent} (Smooth Latent Space) hold.
Let $v_z^*$ and $v_u^*$ denote the population minimizers of the conditional
flow matching objective conditioned on the oracle latent variable $Z$
and the inferred representation $U$, respectively, i.e.,
\[
v_z^*(X_t,X_{\mathrm{ctx}},Z,t)=\mathbb E[Y\mid W_t,Z],
\qquad
v_u^*(X_t,X_{\mathrm{ctx}},U,t)=\mathbb E[Y\mid W_t,U].
\]
Then the corresponding optimal flow matching objectives satisfy
\begin{align}
\mathcal{L}_0(v_u^*)
\;\le\;
\mathcal{L}_0(v_z^*)
+
L^2 \int_0^1 \mathbb{E}\!\left[\mathrm{Var}\!\left(Z \mid W_t, U\right)\right] \, dt .
\end{align}
\end{theorem}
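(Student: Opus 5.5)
The plan is to work at a fixed $t\in[0,1]$ and bound the pointwise risk gap $\mathbb E[\mathrm{Var}(Y\mid W_t,U)]-\mathbb E[\mathrm{Var}(Y\mid W_t,Z)]$; integrating over $t$ then gives the claim. The key observation is that $\mathbb E[Y\mid W_t,U]$ is the $L^2$-projection of $Y$ onto $\mathcal G_t$. Hence
\[
\mathbb E[\mathrm{Var}(Y\mid W_t,U)] = \mathbb E\|Y-\mathbb E[Y\mid \mathcal G_t]\|^2 \le \mathbb E\|Y-g\|^2
\]
for any $\mathcal G_t$-measurable square-integrable $g$. I would choose the competitor
\[
g := m_t\bigl(W_t,\mathbb E[Z\mid W_t,U]\bigr),
\]
which is $\mathcal G_t$-measurable. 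It is the oracle regression function evaluated at the best $\mathcal G_t$-measurable estimate of $Z$.

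Next I would split $Y-g = \bigl(Y-m_t(W_t,Z)\bigr) + \bigl(m_t(W_t,Z)-g\bigr)$ and expand the square. The first term gives $\mathbb E\|Y-m_t(W_t,Z)\|^2 = \mathbb E[\mathrm{Var}(Y\mid W_t,Z)]$, whose time integral is $\mathcal L_0(v_z^*)$. The cross term vanishes provided $m_t(W_t,Z)-g$ is $\sigma(W_t,Z,U)$-measurable and $\mathbb E[Y\mid W_t,Z,U]=m_t(W_t,Z)$. The latter holds by Assumption~\ref{ass:oracle_suff}. Conditioning on $\sigma(W_t,Z,U)$ therefore kills the cross term, since $\mathbb E[Y-m_t(W_t,Z)\mid W_t,Z,U]=0$.

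For the second term, Assumption~\ref{ass:lipschitz_latent} gives $\|m_t(W_t,Z)-g\|\le L\|Z-\mathbb E[Z\mid W_t,U]\|$ almost surely. Squaring and taking expectations, the tower property yields $L^2\,\mathbb E[\mathrm{Var}(Z\mid W_t,U)]$. Integrating over $t$ then gives the stated inequality.

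The main obstacle I expect is technical rather than conceptual. We need $g$ to be well defined and measurable, which requires $\mathbb E[Z\mid W_t,U]$ to lie in the domain of $m_t(w,\cdot)$ (for example, $\mathcal Z$ convex or $m_t$ extended Lipschitzly). The Lipschitz bound is only assumed for almost every $w$, so I would work with a jointly measurable version of $m_t$ and apply the bound on the full-measure set where it holds. I would also assume $Y$ and $Z$ are square-integrable, so that all the conditional variances are finite and the projection argument is valid.
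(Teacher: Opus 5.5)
Your proof is correct and rests on the same ingredients as the paper's: the $L^2$-projection property of conditional expectation onto $\sigma(W_t,U)$, the competitor $m_t(W_t,\mathbb E[Z\mid W_t,U])$, oracle sufficiency to split off $\mathbb E[\mathrm{Var}(Y\mid W_t,Z)]$, and the Lipschitz bound. The only difference is the order of the steps. The paper first uses the law of total variance to get the exact identity $\mathcal L_0(v_u^*)=\mathcal L_0(v_z^*)+\int_0^1\mathbb E[\mathrm{Var}(M_t\mid W_t,U)]\,dt$, with $M_t=\mathbb E[Y\mid W_t,Z]$, and then applies the projection bound to $M_t$; you apply the projection bound to $Y$ directly and kill the cross term with oracle sufficiency, which also makes visible that only the conditional-mean identity in Assumption~\ref{ass:oracle_suff} is needed.
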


\begin{proof}
Fix any $t \in [0,1]$, by Assumption~\ref{ass:oracle_suff}, conditioning on $(W_t,Z)$ renders $U$
uninformative for predicting $Y$, and hence
\[
\mathbb E[Y \mid W_t,U,Z] = \mathbb E[Y \mid W_t,Z] = M_t.
\]

Applying the law of total variance to $Y$ conditional on $(W_t,U)$ yields
\begin{align}
\mathrm{Var}(Y\mid W_t,U)
&=
\mathbb{E}_{Z}\!\left[\mathrm{Var}(Y\mid W_t,U,Z)\mid W_t, U\right]
+
\mathrm{Var}\!\left(\mathbb{E}_Y[Y\mid W_t,U,Z]\mid W_t, U\right).
\end{align}
Substituting the oracle sufficiency condition (i.e., Assumption~\ref{ass:oracle_suff}) gives
\begin{align}
\mathrm{Var}(Y\mid W_t,U)
&=
\mathbb{E}_{Z}\!\left[\mathrm{Var}(Y\mid W_t,Z)\mid W_t, U\right]
+
\mathrm{Var}\!\left(\mathbb{E}_Y[Y\mid W_t,Z]\mid W_t, U\right).
\end{align} 
Taking expectation with respect to $(W_t,U)$ and integrating over $t$,
we obtain
\begin{align}
    \mathcal{L}_0
(v_u^*)&=\int_0^1 \mathbb{E}_{W_t,U}\left[\mathrm{Var}(Y\mid W_t,U)\right]dt\\
&=\int_0^1 \Big(\mathbb{E}_{W_t,U}\left[\mathbb{E}_Z\left[\mathrm{Var}(Y\mid W_t,Z)\mid W_t, U\right]\right]
+\mathbb{E}_{W_t,U}\left[\mathrm{Var}\left(\mathbb{E}_Y\left[Y\mid W_t,Z\right]\mid W_t, U\right)\right] \Big) dt \\
&=\int_0^1 \Big(\mathbb{E}_{W_t,U,Z}\left[\mathrm{Var}(Y\mid W_t,Z)\right]
+\mathbb{E}_{W_t,U}\left[\mathrm{Var}\left(\mathbb{E}_Y\left[Y\mid W_t,Z\right]\mid W_t, U\right)\right] \Big) dt\\
&=\mathcal{L}_0(v_z^*)+\int_0^1 \mathbb{E}_{W_t,U}\left[\mathrm{Var}\left(\mathbb{E}_Y\left[Y\mid W_t,Z\right]\mid W_t, U\right)\right] dt \label{eq_done_first_term}
\end{align}

We now bound the second term. Define $M_t=\mathbb{E}_Y\left[Y\mid W_t,Z\right]$, then by definition of conditional variance and tower property,
\begin{align}
&\mathbb{E}_{W_t,U}\left[\mathrm{Var}\left(\mathbb{E}_Y\left[Y\mid W_t,Z\right]\mid W_t, U\right)\right]\\
&=\mathbb{E}_{W_t,U}\left[ \mathbb{E}_{Z}\left[\|M_t-\mathbb{E}_{Z}\left[M_t\mid W_t, U\right]\|^2\mid W_t,U\right]
\right]\\
&=\mathbb{E}_{W_t,U,Z}\left[\|M_t-\mathbb{E}_{Z}\left[M_t\mid W_t, U\right]\|^2
\right]\\
&=\mathbb{E}_{W_t,U,Z}\left[\|\mathbb{E}_Y\left[Y\mid W_t,Z\right]-\mathbb{E}_{Z}\left[\mathbb{E}_Y\left[Y\mid W_t,Z\right]\mid W_t, U\right]\|^2
\right]
\end{align}

The random variable $\mathbb{E}_{Z}[M_t\mid W_t,U]=\mathbb E_Z\left[\mathbb{E}_Y\left[Y\mid W_t,Z\right] \mid W_t,U\right]$ is $\sigma(W_t,U)$-measurable
and is the $L^2$-orthogonal projection of $M_t$ onto the closed subspace
of $\sigma(W_t,U)$-measurable functions.
Therefore, for any measurable function
$g:\mathcal{W}\times\mathcal{U}\to\mathbb{R}^d$,
\begin{align}
\mathbb{E}_{W_t,U,Z}\!\left[
\left\|M_t - \mathbb{E}_{Z}[M_t\mid W_t,U]\right\|^2
\right]
&\le
\mathbb{E}_{W_t,U,Z}\!\left[
\left\|M_t - g(W_t,U)\right\|^2
\right]\\
&=\mathbb{E}_{W_t,U,Z}\!\left[
\left\| \mathbb{E}_Y\left[Y\mid W_t,Z\right] - g(W_t,U)\right\|^2
\right].
\label{eq:projection_ineq}
\end{align}

Let $\hat Z := \mathbb{E}_Z[Z\mid W_t,U]$, which is $\sigma(W_t,U)$-measurable.
Then $g(W_t,U):=m_t(W_t,\hat Z)$ is also $\sigma(W_t,U)$-measurable.
Applying Assumption~\ref{ass:lipschitz_latent} yields
\begin{align}
\mathbb{E}_{W_t,U}\!\left[
\mathrm{Var}\!\left(M_t\mid W_t,U\right)
\right]
&\le
\mathbb{E}_{W_t,U,Z}\!\left[
\|m_t(W_t,Z)-m_t(W_t,\hat Z)\|^2
\right] \\
&\le
L^2\,\mathbb{E}_{W_t,U,Z}\!\left[\|Z-\hat Z\|^2\right] \\
&=
L^2\,\mathbb{E}\!\left[\mathrm{Var}(Z\mid W_t,U)\right].
\end{align}
Substituting into~\eqref{eq_done_first_term} completes the proof.
\end{proof}

\section{More Experiment results \label{more_exp_results}}

\begin{figure}[H]
  \begin{center}
\centerline{\includegraphics[width=\linewidth]{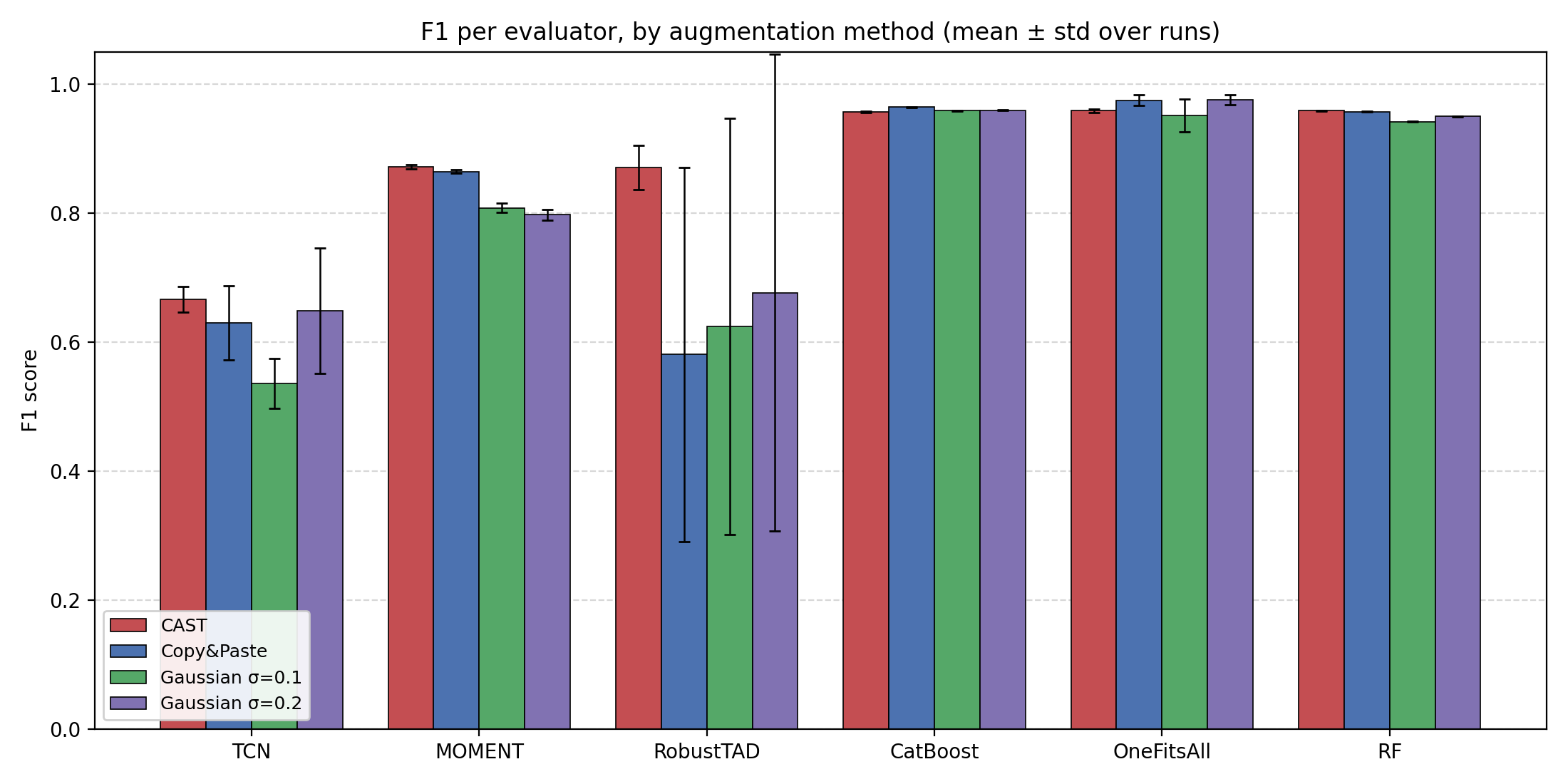}}
    \caption{Comparison between \ourmethod\ and simple baselines. Mean and standard deviation of F1 scores are reported over 5 runs.}  \label{fig:compared_with_simple_baselines}
  \end{center}
\end{figure}

\paragraph{Comparison with Simple Baselines}
We further provide a detailed comparison between \ourmethod\ and simple baselines including Copy-and-Paste and Gaussian Noise Injection in Figure~\ref{fig:compared_with_simple_baselines}. We observe that the performance of different methods varies across evaluators. For CatBoost, OneFitsAll, and Random Forest, all augmentation methods achieve similarly strong performance, indicating that these evaluators are relatively insensitive to the choice of augmentation once a certain performance level is reached. In contrast, for TCN, Moment, and RobustTAD, CAST consistently achieves higher F1 scores compared to baseline augmentation methods, with a clear margin. This suggests that CAST provides more informative and structured synthetic anomalies, which are particularly beneficial for more challenging time series anomaly detection settings.


\begin{table}[t]
\centering
\caption{Comparison of FID on Incart dataset. Higher is better.}
\label{incart_result}
\begin{tabular}{lccc}
\toprule
Method & Random Forest & OneFitsAll & Avg \\
\midrule
CAST        & 0.479 & 0.917 & 0.698 \\
\midrule
FlowTS      & 0.393 & 0.776 & 0.5845 \\
Diffusion-TS& 0.395 & 0.778 & 0.5865 \\
TimeVAE     & 0.382 & 0.407 & 0.3945 \\
C-GATS      & 0.369 & 0.221 & 0.295 \\
GenIAS      & 0.384 & 0.162 & 0.273 \\
\bottomrule
\end{tabular}
\end{table}

\paragraph{Results on Noisier Dataset.}
To further evaluate the performance of different methods on high-dimensional data, we conduct additional experiments on the INCART dataset, a 12-lead ECG dataset.
The results are shown in Table~\ref{incart_result}. We observe that CAST consistently outperforms all baselines across both anomaly detectors, achieving the highest average F1 score. In particular, CAST shows a clear advantage over generative baselines such as FlowTS, Diffusion-TS, and TimeVAE, as well as anomaly-focused methods including C-GATS and GenIAS.

\begin{figure}[bht]
  \begin{center}
\centerline{\includegraphics[width=\linewidth]{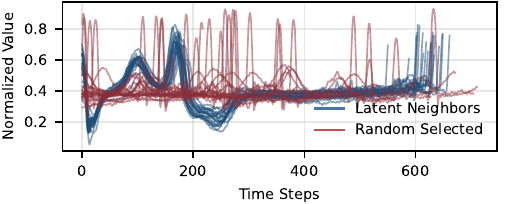}}
    \caption{
Latent structure diagnostic.
Time series neighbors in the trained VQVAE's latent space are shown in blue. Randomly selected time series are shown in red.}    \label{fig:latent_structure_diagnostic}
  \end{center}
\end{figure}

\paragraph{Qualitative Anlysis on VQVAE: } Figure~\ref{fig:latent_structure_diagnostic} examines whether the discrete latent space learned by the VQ-VAE encodes meaningful temporal structure. Time-series segments corresponding to latent neighbors exhibit strong morphological similarity in the time domain. By contrast, randomly selected time series shows substantially larger variability and inconsistent temporal patterns. This observation suggests that the VQ-VAE latent space organizes anomaly-related structures in a semantically meaningful manner. Moreover, the residual variation among latent neighbors indicates that latent proximity alone is insufficient to fully characterize anomaly morphology, motivating the fine-grained anomaly structure conditioning in \ourmethod.




\begin{table}[H]
\centering
\caption{Comparison between modified GenIAS and original GenIAS across multiple anomaly detectors (F1 score). The average performance is reported in the last column.}
\label{tab:genias_compare}
\resizebox{\linewidth}{!}{
\begin{tabular}{c|cccccc|c}
\toprule
Method & Random Forest & CatBoost & RobustTAD & TCN & Moment & OneFitsAll & Avg \\
\midrule

\multicolumn{8}{c}{\textbf{MIT-DB}} \\
\midrule
Modified & 0.609 & 0.410 & 0.782 & 0.604 & 0.590 & 0.837 & \textbf{0.6387} \\
Original & 0.480 & 0.425 & 0.733 & 0.424 & 0.531 & 0.754 & 0.5578 \\

\midrule
\multicolumn{8}{c}{\textbf{SVDB}} \\
\midrule
Modified & 0.632 & 0.558 & 0.581 & 0.733 & 0.495 & 0.903 & \textbf{0.6503} \\
Original & 0.620 & 0.434 & 0.063 & 0.657 & 0.268 & 0.832 & 0.4790 \\

\midrule
\multicolumn{8}{c}{\textbf{QTDB}} \\
\midrule
Modified & 0.443 & 0.427 & 0.165 & 0.527 & 0.392 & 0.813 & \textbf{0.4612} \\
Original & 0.475 & 0.374 & 0.000 & 0.205 & 0.290 & 0.920 & 0.3773 \\

\midrule
\multicolumn{8}{c}{\textbf{PV}} \\
\midrule
Modified & 0.000 & 0.0015 & 0.078 & 0.076 & 0.325 & 0.114 & \textbf{0.0991} \\
Original & 0.000 & 0.0000 & 0.000 & 0.075 & 0.223 & 0.170 & 0.0780 \\

\midrule
\multicolumn{8}{c}{\textbf{Metro}} \\
\midrule
Modified & 0.000 & 0.0000 & 0.000 & 0.000 & 0.227 & 0.314 & 0.0902 \\
Original & 0.000 & 0.0000 & 0.000 & 0.000 & 0.204 & 0.459 & \textbf{0.1105} \\

\bottomrule
\end{tabular}
}
\end{table}

\paragraph{Additional analysis on GenIAS.}
To better understand the role of anomaly-specific modeling, we conduct an additional experiment based on GenIAS. 
In particular, we implement a modified variant of GenIAS that incorporates explicit anomaly-aware modeling during training, aiming to encourage the model to capture structured anomaly patterns rather than relying solely on stochastic latent perturbations. The results are summarized in Table~\ref{tab:genias_compare}. We observe that the modified version achieves improved performance on most datasets, with consistent gains in average F1 score across multiple anomaly detectors. This trend suggests that incorporating explicit anomaly modeling can be beneficial for generating more realistic and useful anomalous samples. We note that this experiment is intended as a controlled analysis to better understand the effect of anomaly structure modeling, rather than a direct comparison between methods.

\section{Limitations \label{app:limitation}}
Although CAST demonstrates strong performance across multiple real-world datasets, several limitations remain. In practical applications, anomaly generation systems may impose stringent requirements on inference efficiency. Although CAST adopts Flow Matching for efficient generation, it still relies on multi-step inference. Exploring more advanced single-step generative paradigms could further improve its applicability in real-world deployment scenarios. In addition, the theoretical analysis presented in this work is primarily intended to provide intuition for the design of CAST rather than establish a comprehensive theoretical foundation. Developing a deeper theoretical understanding of anomaly generation and conditional generative modeling for time series remains an important direction for future research.

\section{Impact Statement \label{app:impact}}

This paper presents work whose primary goal is to advance the field of machine learning, specifically in the area of anomalous time series generation. While the proposed method may have potential applications in safety-critical domains such as system monitoring and fault diagnosis, we do not foresee any immediate negative societal or ethical impacts arising from this work. We believe the techniques developed in this paper contribute to improving the robustness and reliability of data-driven systems, and no additional ethical concerns beyond those commonly associated with machine learning research are identified.

\end{document}